\pdfoutput=1

\documentclass[11pt]{article}
\usepackage[normalem]{ulem}
\usepackage[]{EMNLP2022}

\usepackage{times}
\usepackage{latexsym}
\usepackage{todonotes}

\usepackage[T1]{fontenc}

\usepackage[utf8]{inputenc}

\usepackage{amsmath,amsfonts,bm}









\def\eqref#1{equation~\ref{#1}}









\def\1{\bm{1}}


\def\rh{{\textnormal{h}}}



\def\rvg{{\mathbf{g}}}
\def\rvh{{\mathbf{h}}}

\def\rvx{{\mathbf{x}}}

\def\rvz{{\mathbf{z}}}





\def\vb{{\bm{b}}}

\def\vx{{\bm{x}}}
\def\vy{{\bm{y}}}
\def\vz{{\bm{z}}}


\def\mA{{\bm{A}}}

\def\mD{{\bm{D}}}

\def\mH{{\bm{H}}}
\def\mI{{\bm{I}}}

\def\mL{{\bm{L}}}

\def\mU{{\bm{U}}}

\def\mW{{\bm{W}}}
\def\mX{{\bm{X}}}

\def\mZ{{\bm{Z}}}

\def\mLambda{{\bm{\Lambda}}}

\DeclareMathAlphabet{\mathsfit}{\encodingdefault}{\sfdefault}{m}{sl}
\SetMathAlphabet{\mathsfit}{bold}{\encodingdefault}{\sfdefault}{bx}{n}













\usepackage{microtype}
\usepackage{graphicx}
\usepackage{subfigure}
\usepackage{booktabs}
\usepackage{amsmath}
\usepackage{amssymb}
\usepackage{mathtools}
\usepackage{amsthm}

\usepackage[capitalize,noabbrev]{cleveref}

\theoremstyle{plain}
\newtheorem{theorem}{Theorem}[section]

\theoremstyle{definition}
\newtheorem{definition}[theorem]{Definition}

\theoremstyle{remark}


\usepackage{todonotes}



\usepackage{amsbsy}
\usepackage{times}
\usepackage{latexsym}
\usepackage{amsfonts}
\usepackage{dsfont}
\usepackage{pgfplots}
\pgfplotsset{compat=1.17}
\usepackage{diagbox}
\usepackage{xspace}
\usepackage{multirow}
\usepackage{color}
\usepackage{wrapfig}
\usepackage[font=small,labelfont=bf]{caption}
\usepackage{comment}
\usepackage{tikz}
\usetikzlibrary{intersections}
\newcommand{\MI}{{\rm MI}}
\newcommand{\GCS}{{\rm GCS}}
\newcommand{\GFT}{{\rm GFT}}
\newcommand{\RGFT}{{\rm RGFT}}
\newcommand{\MLP}{{\rm MLP}}
\usepackage{scalefnt}

\usepackage{inconsolata}

%
%

\title{
What Has Been Enhanced in my Knowledge-Enhanced Language Model?
}


\author{Yifan Hou$^{1}$, Guoji Fu$^{2}$, Mrinmaya Sachan$^{1}$ \\
  $^{1}$ETH Z\"urich, \quad  $^{2}$Southern University of Science and Technology \\
  $^{1}$\texttt{\{yifan.hou, mrinmaya.sachan\}@inf.ethz.ch}, \quad $^{2}$\texttt{11749236@mail.sustech.edu.cn}}

\begin{document}
\maketitle
\begin{abstract}
A number of knowledge integration (KI) methods have recently been proposed to incorporate external knowledge into pretrained language models (LMs).
Even though knowledge-enhanced LMs outperform base LMs on knowledge-intensive tasks, the inner-workings of these KI methods are not well-understood. 
For instance, it is unclear which knowledge is effectively integrated into knowledge-enhanced LMs and which is not; and if such integration leads to catastrophic forgetting of already learned knowledge. 
We show that existing model interpretation methods such as linear probes and prompts have some key limitations in answering these questions.
We revisit KI from an information-theoretic view and propose a new theoretically sound probe called \textit{Graph Convolution Simulator} (GCS) for KI interpretation.
GCS uses graph attention on the corresponding knowledge graph for interpretation.
In our experiments we verify that GCS can provide reasonable interpretation results for two well-known knowledge-enhanced LMs: ERNIE and K-Adapter.
We also find that only a marginal amount of knowledge is successfully integrated in these models, and simply increasing the size of the KI corpus may not lead to better knowledge-enhanced LMs.\footnote{Our code, demo, and instructions of the usage can be found in \href{https://github.com/yifan-h/GCS_KI}{https://github.com/yifan-h/GCS\_KI}}
\end{abstract}

\section{Introduction}
Pretrained language models (LMs) have become the backbone of NLP. Recent work has shown that linguistic knowledge is captured quite well in LMs ~\citep{liu-etal-2019-linguistic,jawahar-etal-2019-bert}.
However, 
LMs are much worse at capturing factual knowledge about the world~\citep{petroni-etal-2019-language, wang-etal-2021-kepler}.
This has led to the development of a number of knowledge integration (KI) methods which integrate external knowledge from knowledge graphs (KGs) into LMs, leading to knowledge-enhanced language models such as \textit{ERNIE} ~\citep{zhang-etal-2019-ernie} and \textit{K-Adapters} \citep{wang-etal-2021-k}.

Even though enhanced LMs perform better on knowledge-intensive tasks, there is little understanding of where this improvement comes from.
Which factual knowledge is successfully integrated into the LM, and which kind of knowledge is not, is not well-understood.
%
As new knowledge is integrated in LMs, old knowledge could be \textit{catastrophically forgotten}~\citep{forgetting_kirkpatrick16}. KI could also lead to a situation called \textit{catastrophic remembering}~\citep{crcf_kaushik21}, where the old knowledge could prevent the integration of new knowledge. Our understanding about these issues is also limited.

\begin{figure*}[!htbp]
	\centering
	\includegraphics[scale=.135]{./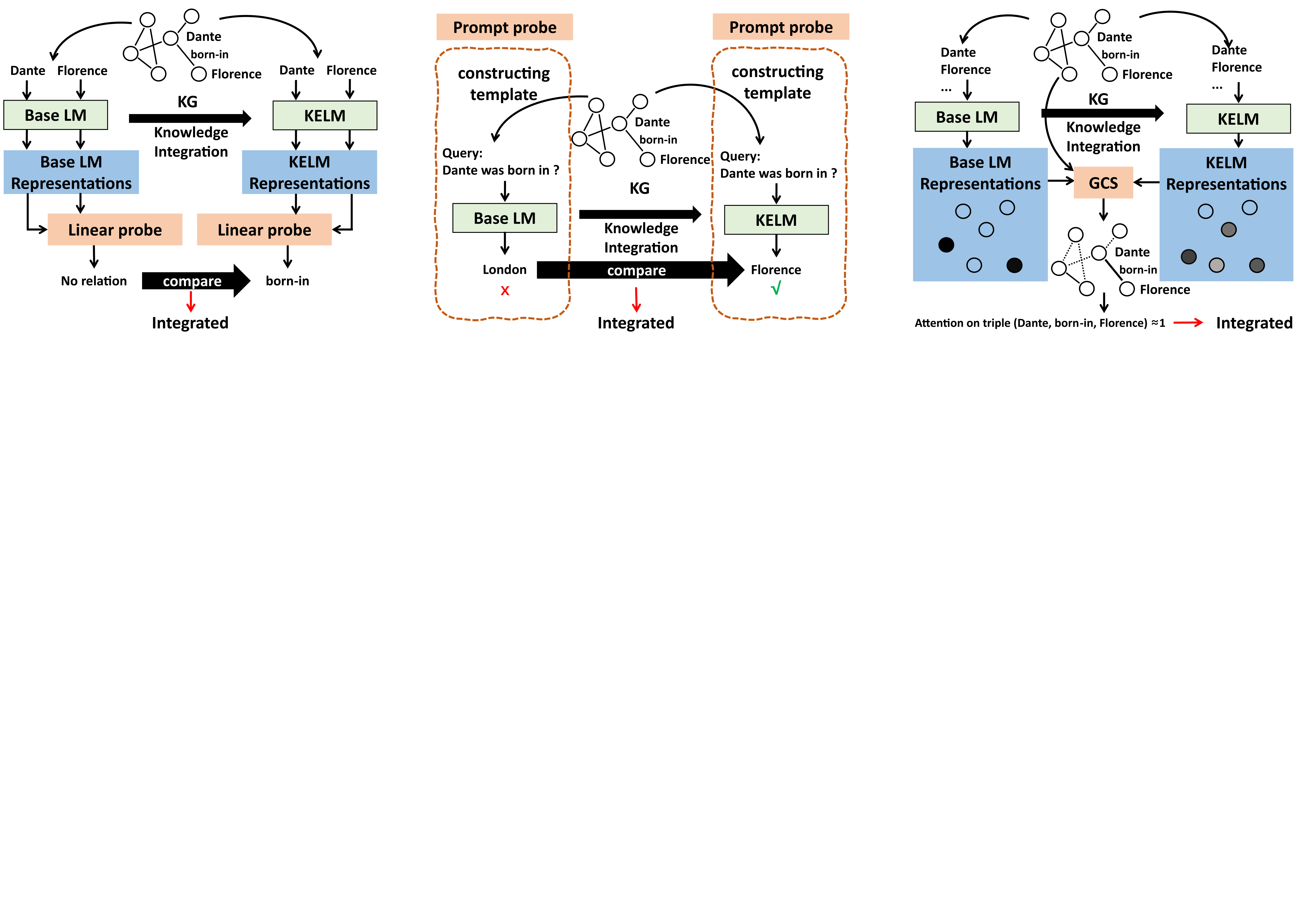}
	\vspace{-.2cm}
	\caption{Adaptations of two existing probe methods and GCS to interpret knowledge integration in language models. Probes from left to right are: linear probe, prompt-based probe, and GCS. We can find that only GCS does not have the \textit{compare} operation, which can avoid introducing extra noise of interpretation results.}
	\vspace{-.2cm}
	\label{fig:compare}
\end{figure*}

An intuitive way to understand the KI process could be to probe for factual knowledge in the base LM and the enhanced LM respectively, and compare their results for interpretation.
However, we find that because of the high variance of classifier-based probes for KG prediction~\cite{DBLP:conf/pkdd/MenonE11,DBLP:conf/aaai/LiCT16} and the significant human effort required to verbalize knowledge graph facts into templates~\cite{petroni-etal-2019-language,shin-etal-2020-autoprompt}, we cannot easily extend existing probe methods to interpret knowledge-integration in LMs (\S\ref{section:uns}).

In this paper, we revisit the KI process and formulate it with an information-theoretic view (\S\ref{section:method}). 
We measure the factual knowledge encoded in the LM by the mutual information (MI) between the model representations and the KG~\citep{hou-sachan-2021-birds}. 
Now, the integration and forgetting of factual knowledge can be measured using the difference in MI between the KG and the base LM and the MI between the KG and the enhanced LM. 
Based on this idea, we theoretically derive a transformation composed of graph convolutions to simulate and interpret this change in MI, leading to our proposed probe, Graph Convolution Simulator (GCS) (\S\ref{section_gcs}). The interpretation mechanism in GCS is quite different from existing probe methods as shown in Figure~\ref{fig:compare}. 
GCS uses a graph attention layer on the KG to simulate the MI change. Then, it interprets KI from the information flow on the KG using graph attention coefficients.

In our experiments (\S\ref{section:exp}), we verify that GCS can provide reasonable KI interpretation results. 
We show that: (a) results of GCS have significantly smaller variance compared to the linear probe, (b) dropping knowledge identified as \textit{non-integrated} by GCS does not affect enhanced LMs' performance, and (c) enhanced LMs perform better on samples that include knowledge identified as \textit{integrated} by GCS.
In particular, we use GCS to understand two well-known knowledge-enhanced LMs: \textit{ERNIE}~\citep{zhang-etal-2019-ernie} and \textit{K-Adapter}~\citep{wang-etal-2021-k}. 
Our findings are listed as follows.
\begin{itemize}
    \item Both of them only integrate little new knowledge (i.e., less than 30\% knowledge triples are successfully integrated). ERNIE is better at integrating triples with high degree in KGs, while K-Adapter integrates triples with low-degree entities well.
    
    \item In our qualitative study, we find that enhanced LMs do not integrate numbers and temporal knowledge well, where only less than 0.01\% triples are successfully integrated.
    
    \item Finally, we find that there is no positive relationship between KI corpus size and KI quality. This suggests that merely building larger corpus would not be enough, highlighting the need for more fundamental advances in KI.
\end{itemize}


\section{Preliminaries}
\paragraph{KI methods.} There are several approaches for KI. 
KI in LMs can be implemented by aligning phrases in text to entities ~\citep{peters-etal-2019-knowledge,zhang-etal-2019-ernie} or triples ~\citep{kbert_liu20,wang-etal-2021-k} and incorporating corresponding entity or the triple embeddings in the LM.
KI methods also include modifications to the Transformer architecture~\citep{peters-etal-2019-knowledge,kbert_liu20}, 
verbalizing knowledge triples and using data augmentation for finetuning~\citep{agarwal-etal-2021-knowledge}, and designing objective functions that predict the factual knowledge~\citep{kgbert_yao19,wang-etal-2021-k}.

\paragraph{Knowledge graphs.} 
We assume that factual knowledge for integration can be formulated as a KG $\mathcal{G} = ( \mathcal{V}, \mathcal{E})$, where nodes $v_i \in \mathcal{V}$ represent entities, and edges in $\mathcal{E}$ represent relations between them. 
Let $\mathcal{N}_{v_i}$ denote the set of neighbors of node $v_i$, and $t_i$ denote the entity label corresponding to the node $v_i$. Further, let $\vx_i = {\rm LM}(t_i)$ denote the entity (label) representations given by a LM\footnote{We represent each entity as the average of its word(-piece) embeddings given by the LM as~\citet{hewitt-manning-2019-structural}.}, and $\mX \in \mathbb{R}^{|\mathcal{V}|\times d}$ denote a matrix formed by stacking all entity representations $\vx_i \in \mathbb{R}^d$.
In this paper, we only consider nodes and edges in the KG and ignore other side information such as relation weights, directions and labels\footnote{Note that the relation label follows the power-law distribution, and existing probe methods also can not support it well as discussed in Appendix~\ref{appendix:relation}}.

\section{Unsuitability of Existing Probes for KI}\label{section:uns}
Classifier probes~\cite{linearprobe_guillaume,hewitt-liang-2019-designing} and prompt-based probes (e.g. the LAMA probe)~\cite{petroni-etal-2019-language,shin-etal-2020-autoprompt} are typically used to test for various kinds of knowledge in LMs.
Classifier probes train simple (usually linear) classifiers to predict the linguistic property of interest and the probe accuracy is used for interpretation~\citep{ribeiro-etal-2016-trust,hewitt-manning-2019-structural}. 
However, simple classifiers are not powerful enough to make reliable predictions about large KGs~\citep{DBLP:conf/pkdd/MenonE11,DBLP:conf/aaai/LiCT16}.\footnote{Note that if a more powerful probe model is used, it would learn the task itself instead of probing the task as discussed in~\citep{hewitt-manning-2019-structural,pimentel-etal-2020-information}}
Moreover, linear probes are also unable to provide reasonable insights for LMs as they suffer from high variance. If we use them to probe two LMs and compare their results for KI interpretation, variance of interpretation would further increase. 
We provide empirical evidence later in \S\ref{exp:verify:variance}.

Prompting is another popular way to understand what factual knowledge LMs know. 
Prompts are designed to let LMs solve text infilling problems, and the prompt output is then used for interpretation~\citep{petroni-etal-2019-language}. 
However, people have to manually design templates for factual knowledge to be probed\footnote{Even if existing prompt methods could learn the template automatically~\citep{shin-etal-2020-autoprompt}, the training process could also learn the task and provide unreliable probe results~\citep{jiang-etal-2021-know,zhong-etal-2021-factual,cao-etal-2021-knowledgeable}.}, and the quality of templates is vital in the overall prompt accuracy~\citep{jiang-etal-2021-know,li-etal-2022-probing-via}. As KI methods often use large KGs for integration, 
it would be infeasible to write a large number of templates for all triples in KGs.
To address these issues, we introduce the GCS model and its theoretical motivation behind.

\section{Knowledge Integration Understanding}\label{section:method}
First, we revisit KI in LMs by formulating it in an information-theoretic way. Then, we construct transformations to simulate and interpret the KI process. Notations are summarized in Appendix~\ref{appendix:notations}.

\subsection{Knowledge Integration Definition}\label{section:method:definition}
We measure knowledge in LMs using mutual information (MI) between the knowledge and the LM representations~\citep{hou-sachan-2021-birds}.
We assume that the local graph $\mathcal{G}(v_i)$ contains all factual knowledge regarding on $v_i$, and successfully integrated knowledge should be reflected in the entity representations.
Let $\rvx$ be a random variable that takes values ranging over all possible entity representations of a LM\footnote{Here, the set of entity representations $\mX \in \mathbb{R}^{|\mathcal{V}|\times d}$ can be regarded as empirical samples from $\rvx$.}, and $\rvg$ be a random variable that ranges over all possible corresponding local structures $\mathcal{G}(v_i)$. 
Mutual information $\MI(\rvx; \rvg)$ can be used to measure the amount of information in $\rvg$ contained in $\rvx$ as 
$$\MI(\rvx; \rvg) = D_{KL}(\mathbb{P}_{\rvx\rvg}||\mathbb{P}_{\rvx} \otimes \mathbb{P}_{\rvg}),$$ which is equivalent to the Kullback-Leibler (KL) divergence between the joint distribution $\mathbb{P}_{\rvx\rvg}$ and the product of the marginal distributions $\mathbb{P}_{\rvx} \otimes \mathbb{P}_{\rvg}$. Next, we present a formal definition of KI.

\begin{definition}[{Knowledge Integration}] \label{def_ki}
    Let $\rvx$, $\rvh$ denote random variables for entity representations in the base LM and the enhanced LM, respectively.
    The KI process can be formulated as a function $f$ such that $\rvh = f(\rvx,\rvg)$. Consequently, we assume that the knowledge change during KI can be measured using MI by: $\MI(\rvx;\rvg) \rightarrow  \MI(\rvh;\rvg)$.
\end{definition} 

\begin{wrapfigure}{ro}{0.28\textwidth}
    \vspace{-.3cm}
    \centering
    \includegraphics[scale=.12]{./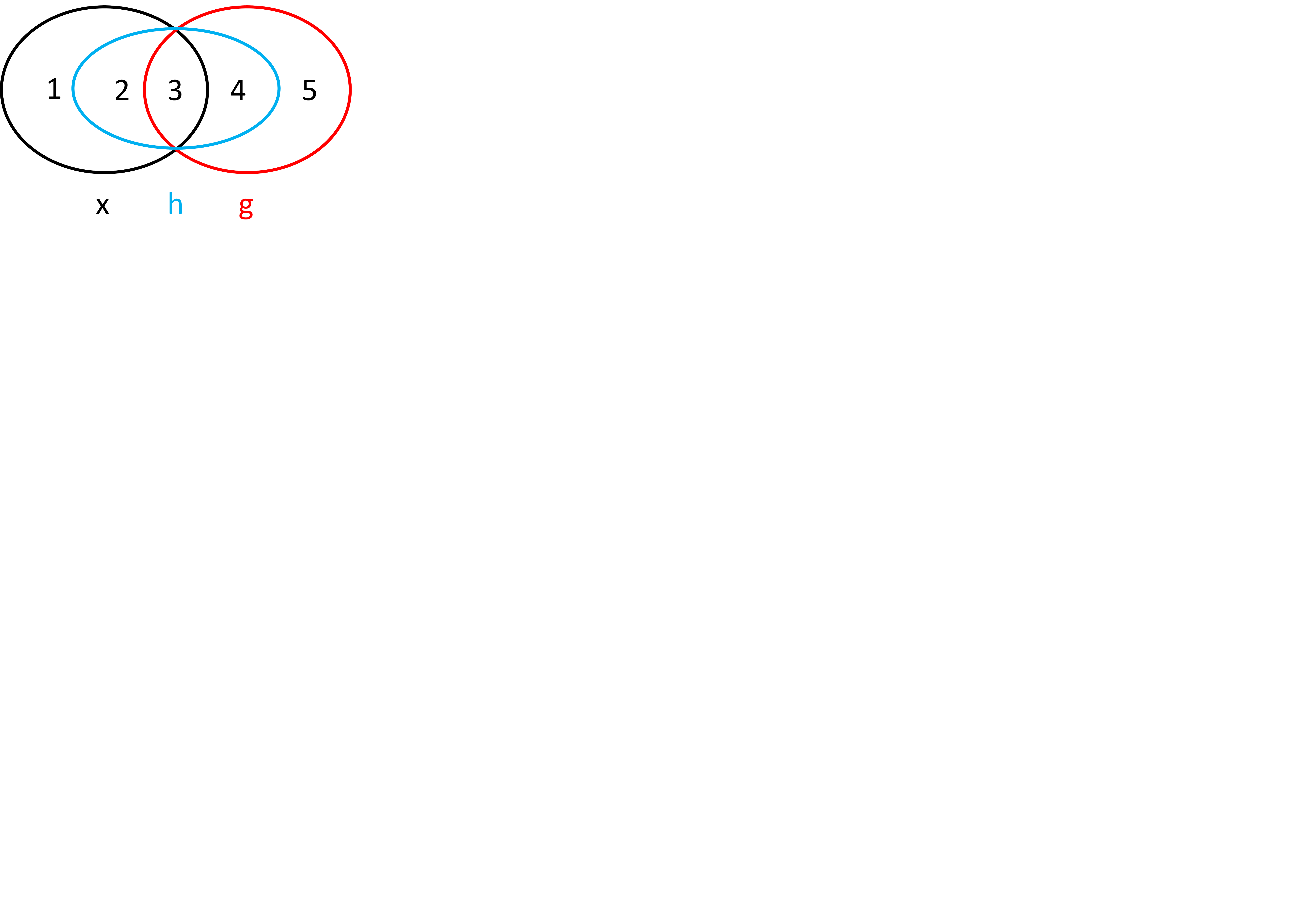}
    \vspace{-.2cm}
    \caption{Venn diagram of KI. $\rvx$ and $\rvh$ are random variables for entity representations of base LM and enhanced LM. $\rvg$ is the random variable for the local graph structure.}
    \label{fig:kidef}
    \vspace{-.2cm}
\end{wrapfigure}
Definition~\ref{def_ki} can be intuitively visualized by Figure~\ref{fig:kidef}.
Ideally, if most knowledge is successfully integrated without much forgetting of the old knowledge, regions $2$ and $4$ are large. We have $\MI(\rvh;\rvx) \!\approx\! \MI(\rvx; \rvx)$, and $\MI(\rvh;\rvg) \!\approx\! \MI(\rvg; \rvg)$. 
If little new knowledge has been integrated, i.e., catastrophic remembering happens, region $4$ is small, and we have $\MI(\rvh;\rvg) \!\approx\! \MI(\rvx; \rvg)$. Similarly, if catastrophic forgetting happens, region $2$ is small and we have $\MI(\rvh;\rvx) \!\approx\! \MI(\rvx; \rvg)$.

\subsection{Knowledge Integration Simulation}
Note that $f$ in Definition~\ref{def_ki} shows how KI happens and is an unknown ground-truth transformation that depends on many factors such as the base LM, KI corpus, and KI methods.
Thus, we propose an \textit{approximated transformation} $f'$ to fit $f$ such that it can simulate and approximate $f$ with high accuracy (\S\ref{section:method:transformation}). 
However, high accuracy cannot promise interpretation.
To interpret KI in a fine-grained way, we propose another \textit{interpretable transformation} $f''$ such that it can be as close to $f$ as $f'$ under the MI measurement  (\S\ref{section:method:simulation}).
\begin{figure*}[!htbp]
    \centering
    \includegraphics[scale=.60]{./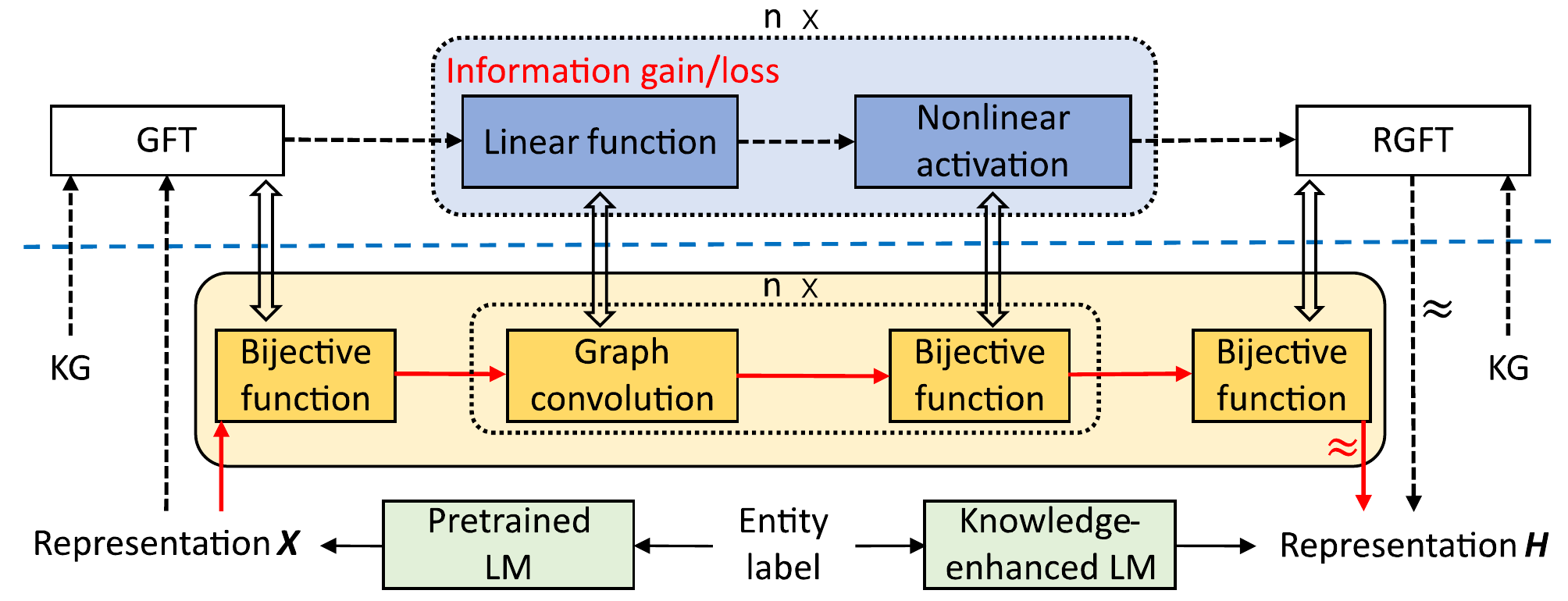}
    \vspace{-.1cm}
    \caption{Illustration of the KI simulation. The part above the horizontal blue dashed line represents graph spectral domain (i.e., KG space). GFT and RGFT are graph Fourier transformation and its inverse transformation.  Black dashed arrows show the \textit{approximated transformation} (i.e., $f'$), which can promise the approximation accuracy. Red arrows show the \textit{interpretable transformation} with graph convolutions (i.e., $f''$), which can promise the accuracy under the MI measurement and interpretability.}
    \vspace{-.2cm}
    \label{fig:intuition}
\end{figure*}
Figure~\ref{fig:intuition} briefly illustrates the idea.
The shown black dashed transformation (i.e., $f'$) can simulate KI with arbitrary accuracy. 
The solid red lines represent the interpretable transformation (i.e., $f''$) using graph convolutions, which could promise accuracy with the MI measurement and it is interpretable.
Below we introduce details of the two transformations.

\subsubsection{Approximated Transformation}\label{section:method:transformation}
Note that samples of $\rvg$ are non-Euclidean (local graph structures) while samples of $\rvx$ and $\rvh$ are vectors. To understand how $\rvx$ is transformed to $\rvh$ by integrating $\rvg$, we first map $\rvx$ and $\rvh$ to a new space related to $\rvg$.
Graph Fourier transform (GFT)~\citep{spectrum_smoothness_sandryhaila14} can be used to transform the entity representation $\rvx$ from the spatial domain to the graph spectral domain (KG space). 
We denote the transformation of $\rvx$ to the KG space as $\GFT(\rvx),$ and its inverse transformation as $\RGFT(\GFT(\rvx)) = \rvx$. Formal definition can be found in Appendix~\ref{appendix:gft_def}.
Using GFT, we will look at the change from $\rvx$ to $\rvh$ in the KG space, and construct an approximated transformation to simulate the KI process there.

\begin{theorem}[Approximation] \label{th_t1}
    Given a base LM and its enhanced LM, suppose that $\MI(\rvx;\rvg)<\MI(\rvh;\rvg)$. Then, for any $\epsilon > 0$, there exists an $n$-layer neural network ${\rm NN}^n(\cdot)$ such that 
    \begin{equation}
        |f(\rvx, \rvg) - \RGFT({\rm NN}^n(\GFT(\rvx)))|<\epsilon. \nonumber
    \end{equation}
\end{theorem}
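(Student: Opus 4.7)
The plan is to reduce the claim to a direct application of the universal approximation theorem (UAT). Since $\GFT$ is defined via the eigendecomposition of a graph-structure operator (e.g., the Laplacian of $\rvg$), it is an invertible linear map, and $\RGFT \circ \GFT$ is the identity on entity-representation space. Hence the identity
\begin{equation*}
    f(\rvx,\rvg) \;=\; \RGFT\!\left(\GFT\!\left(f(\rvx,\rvg)\right)\right)
\end{equation*}
lets me rewrite the target error as $\bigl|\RGFT\!\left(\GFT(f(\rvx,\rvg))\right) - \RGFT\!\left({\rm NN}^n(\GFT(\rvx))\right)\bigr|$. Using that $\RGFT$ is a bounded linear operator with some operator-norm constant $C$ (since it is the pseudo-inverse of a matrix built from a finite KG Laplacian), this quantity is controlled by $C$ times the error in the spectral domain, namely $\bigl|\GFT(f(\rvx,\rvg)) - {\rm NN}^n(\GFT(\rvx))\bigr|$.

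Next I would define the target spectral-domain map $T : \GFT(\rvx) \mapsto \GFT(f(\rvx,\rvg))$, viewed as a function from $\mathbb{R}^d$ to $\mathbb{R}^d$ (for a fixed KG $\rvg$, which is baked into $\GFT$ and into the KI function $f(\cdot,\rvg)$). Because $\GFT$ is bijective and linear, $T$ inherits whatever regularity $f(\cdot,\rvg)$ possesses. Assuming $T$ is continuous on the compact range of plausible entity representations coming from a LM, the classical UAT (Cybenko/Hornik) yields, for any $\epsilon' > 0$, an $n$-layer neural network $\mathrm{NN}^n$ with $\sup \bigl|\mathrm{NN}^n(\GFT(\rvx)) - T(\GFT(\rvx))\bigr| < \epsilon'$. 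Taking $\epsilon' = \epsilon/C$ closes the argument.

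The main obstacle is not the core approximation step but the hidden regularity and compactness hypotheses. The function $f$ is only defined implicitly through the KI training procedure, so I would have to either assume, or argue from properties of the base LM and the KI pipeline, that $f(\cdot,\rvg)$ is continuous on a bounded domain — otherwise UAT does not apply uniformly. A secondary subtlety is that the hypothesis $\MI(\rvx;\rvg) < \MI(\rvh;\rvg)$ plays no visible role in the bound: it seems to be contextual motivation (\emph{something} has been integrated so that the approximation is non-trivial) rather than a premise the proof actually consumes, and I would flag this and check whether it is instead needed to rule out degenerate cases where $\GFT$ collapses information that $\mathrm{NN}^n$ must later recover. If so, one may need a slightly sharper version of UAT or a stability argument on the spectral side; otherwise the proof is essentially a three-line composition of invertibility, boundedness of $\RGFT$, and UAT.
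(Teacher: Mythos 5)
Your proposal matches the paper's proof essentially step for step: the paper likewise defines the spectral-domain target map $g'(\GFT(\rvx)) = \GFT(\rvh)$ (your $T$), applies the universal approximation theorem to it, and pulls the error back through $\RGFT$ using the norm $|\mU|$ of the eigenvector matrix as your constant $C$, choosing $\epsilon' = \epsilon/|\mU|$. Your two flagged caveats are accurate as well --- the paper also never uses the hypothesis $\MI(\rvx;\rvg) < \MI(\rvh;\rvg)$ and explicitly defers the regularity and width constraints required by the approximation theorem.
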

The proof can be found in Appendix~\ref{appendix:theorem1}. Theorem~\ref{th_t1} shows that there exists an approximated transformation composed of GFT and a neural network that can simulate $f$ with arbitrary accuracy. 
However, the transformation $f'$ cannot provide specific insights about spatial samples. For example, it cannot show which set of knowledge triples contribute to KI.
Thus, to interpret KI, we develop a new transformation which can promise both accuracy and interpretability.

\subsubsection{Interpretable Transformation}\label{section:method:simulation}
In order to achieve an interpretable transformation with high accuracy, we make use of the invariance property of MI~\citep{invariance_mi_kraskov04}, i.e., the property that bijective functions would not change the MI. 
If we can change the metric in Theorem~\ref{th_t1} from $L1$ norm to MI, and replace operations in $f'(\rvx, \rvg)$ that change MI by other equivalent and interpretable operations, we can obtain an interpretable transformation to simulate KI.

Graph convolutions~\citep{sharedparam_defferrard16} are often used to model the relational information in KGs by filters (i.e., kernels), where entities aggregate information from their neighbors and pass the information based on the KG structure. 
Let ${\rm GC}(\cdot)$ denote the convolution operation on $\mathcal{G}$. Formal definition of ${\rm GC}$ can be found in Appendix~\ref{appendix:gft_def}.
If we run graph convolutions with attention~\citep{gat_velickovic18}, the information flow on graphs can be indicated by attention coefficients~\citep{zheng-etal-2020-document,smoothness_guoji20}, which can be used for interpretation.
Thus, we propose an interpretable transformation to simulate KI process using graph convolutions with attention mechanism.

\begin{theorem}[Interpretation] \label{th_t2}
    Given a base LM and its enhanced LM, let $\MI(\rvx;\rvg) < \MI(\rvh;\rvg)$. Denote that $f'(\rvx,\rvg)=\RGFT({\rm NN}^n(\GFT(\rvx)))$. 
    Let $\MLP_b(\cdot)$ denote the bijective MLP layer and ${\rm GC}(\cdot)$ denote the graph convolution on KG $\mathcal{G}$. There exists $f''(\rvx,\rvg)=\MLP_b({\rm GC}^n(\MLP_b(\rvx)))$, where ${\rm GC}^n$ is composed of $n$ repeat components as $\MLP_b({\rm GC}(\cdot))$, such that
    \begin{equation}
        \MI(f''(\rvx, \rvg); f(\rvx, \rvg)) = \MI(f'(\rvx, \rvg); f(\rvx, \rvg)). \nonumber
    \end{equation}
\end{theorem}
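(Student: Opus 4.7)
The plan is to combine two ingredients already established or implicit in the paper: (1) the invariance of mutual information under bijective transformations (which is cited as the motivating property in Section~\ref{section:method:simulation}), and (2) the spectral characterization of graph convolutions, namely that ${\rm GC}$ acts as an element-wise filter in the graph spectral domain, i.e.\ ${\rm GC}(\cdot) = \RGFT(D \cdot \GFT(\cdot))$ for some diagonal spectral filter $D$. With these in hand the proof reduces to a rewriting argument plus one MI-invariance step.

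First I would rewrite each elementary block $\MLP_b \circ {\rm GC}$ appearing inside $f''$ using the spectral identity above, obtaining $\MLP_b \circ \RGFT \circ D \circ \GFT$. Unrolling the $n$ stacked blocks of ${\rm GC}^n$ and inserting the outer $\MLP_b(\cdot)$ and inner $\MLP_b(\rvx)$ wrappers, the adjacent $\GFT$ and $\RGFT$ operators in consecutive blocks compose to a bijection of the spectral domain, which can be absorbed into the (bijective) $\MLP_b$ layers sitting between them. This lets me recast $f''$ as $\RGFT \circ \Phi \circ \GFT \circ \MLP_b^{\rm in}$, followed by an outer $\MLP_b^{\rm out}$, where $\Phi$ is a depth-$n$ neural network in the spectral domain built by alternating diagonal spectral filters $D_i$ with bijective spectral-image counterparts of $\MLP_b$ (which remain bijective since bijectivity is preserved by composition with the bijections $\GFT,\RGFT$).

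Second, appealing to the universal approximation argument already invoked in Theorem~\ref{th_t1}, one can pick the filters $\{D_i\}$ and the $\MLP_b$ parameters so that $\Phi$ realises the same spectral neural network ${\rm NN}^n$ used in $f'$. Hence $f'' = \MLP_b^{\rm out} \circ f' \circ \MLP_b^{\rm in}$, with both $\MLP_b^{\rm out}$ and $\MLP_b^{\rm in}$ bijective by construction. Applying the MI invariance property $\MI(\psi(Y); Z) = \MI(Y; Z)$ for bijective $\psi$~\citep{invariance_mi_kraskov04} to the outer $\MLP_b^{\rm out}$ and noting that the inner $\MLP_b^{\rm in}$ only relabels the input to $f'$ (which is itself applied to $\rvx$, hence bijectively related to $\rvx$, so the input distribution induces the same joint law up to a measurable bijection of the sample space), we conclude $\MI(f''(\rvx,\rvg); f(\rvx,\rvg)) = \MI(f'(\rvx,\rvg); f(\rvx,\rvg))$.

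The main obstacle will be the expressiveness step: making precise that an architecture consisting of diagonal spectral filters sandwiched between bijective MLPs can instantiate the arbitrary spectral-domain network ${\rm NN}^n$ from Theorem~\ref{th_t1}. The diagonal filters alone cannot mix spectral channels, so the bijective MLPs must carry all cross-channel mixing and nonlinearity, and one must argue that restricting to bijective (invertible) MLPs does not reduce approximation power compared with the general MLPs implicit in ${\rm NN}^n$. A secondary technical point is handling the inner $\MLP_b(\rvx)$: since the theorem statement uses $\rvx$ on both sides inside the MI, treating the inner $\MLP_b$ as an invertible reparametrisation of the input and invoking invariance at the level of the joint distribution $\mathbb{P}_{\rvx\rvg}$ should close the argument cleanly, but needs to be spelled out to avoid conflating bijection of samples with bijection of random variables in the MI definition.
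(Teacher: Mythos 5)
Your overall strategy is the same as the paper's (Appendix~\ref{appendix:theorem2}): both arguments rest on exactly two ingredients, the invariance of MI under bijections (covering $\GFT$, $\RGFT$, the activation, and the bijective MLPs) and the convolution-theorem identification of linear maps in the graph spectral domain with graph convolutions. The paper runs the rewriting from $f'$ outward, isolating the linear layers of ${\rm NN}^n$ as the only MI-changing components and re-labelling them as graph convolutions; you run it from $f''$ inward, expanding each ${\rm GC}$ spectrally and collapsing the resulting $\RGFT\circ\GFT$ pairs. Your expressiveness worry about diagonal filters is fair but is resolved by the paper's own definition of ${\rm GC}$, which includes a channel-mixing weight matrix ($\mW^{V}$ in the attention layer; the non-shared multi-channel filters of Bruna et al.\ in Step~3), so the bijective MLPs do not have to carry the cross-channel mixing alone.

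The one step whose justification is wrong as written is your handling of the inner $\MLP_b^{\rm in}$. MI invariance says $\MI(\psi(Y);Z)=\MI(Y;Z)$ for a bijection $\psi$ applied to the random variable whose information content is being measured; it does \emph{not} say $\MI(g(\psi(\rvx));\rvw)=\MI(g(\rvx);\rvw)$ for a bijective reparametrisation of the input to a non-injective map $g$ (take $g$ a ReLU and $\psi(x)=-x$: the two outputs retain different information about $\rvx$). So ``the inner $\MLP_b^{\rm in}$ only relabels the input, hence induces the same joint law up to a bijection'' does not establish the claimed equality, because $f'\circ\MLP_b^{\rm in}(\rvx)$ and $f'(\rvx)$ need not be bijectively related when $f'$ is not injective. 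The fix is cheap precisely because the theorem is an existence claim: choose the first spectral filter (or the first $\MLP_b$ of ${\rm GC}^n$) so that it absorbs $(\MLP_b^{\rm in})^{-1}$, i.e., arrange $\Phi\circ\GFT\circ\MLP_b^{\rm in}={\rm NN}^n\circ\GFT$ exactly, leaving only the outer bijection $\MLP_b^{\rm out}$ to be handled by genuine MI invariance. This is what the paper's Step~2 does implicitly when it pushes all MI change into the linear/convolution layers and treats everything else as bijective.
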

The proof can be found in Appendix~\ref{appendix:theorem2}. 
Note that the equality above becomes approximate equality when graph filters are approximated filters. For example, GCN~\citep{gcn_thomas17} as well as its variants (e.g., GAT~\citep{gat_velickovic18}) runs graph convolutions using localized first-order approximated filters. 
Theorem~\ref{th_t2} shows that we can use graph convolution operations to gain interpretability without the loss of MI. 
Assuming that MI can be used to measure knowledge in LMs as defined before, the interpretable transformation in Theorem~\ref{th_t2} can promise accuracy as well.

\subsection{Knowledge Integration Interpretation}\label{section:gcs:ga}
As shown in Theorem~\ref{th_t2}, $f''$ is composed of $n$ graph convolution operations and $n+2$ bijective functions (bijective MLPs). 
Since MI does not change with bijective functions, the information change (i.e., $\MI(\rvx;\rvg) \rightarrow \MI(\rvh;\rvg)$) can only happen in graph convolutions in $f''(\rvx, \rvg)$.
We then use graph attention coefficients in the graph convolutions to interpret how the information flows in a KG. Based on the information flow, we can interpret the integration of knowledge triples.

\begin{figure}[!htbp]
    \centering
    \includegraphics[scale=.14]{./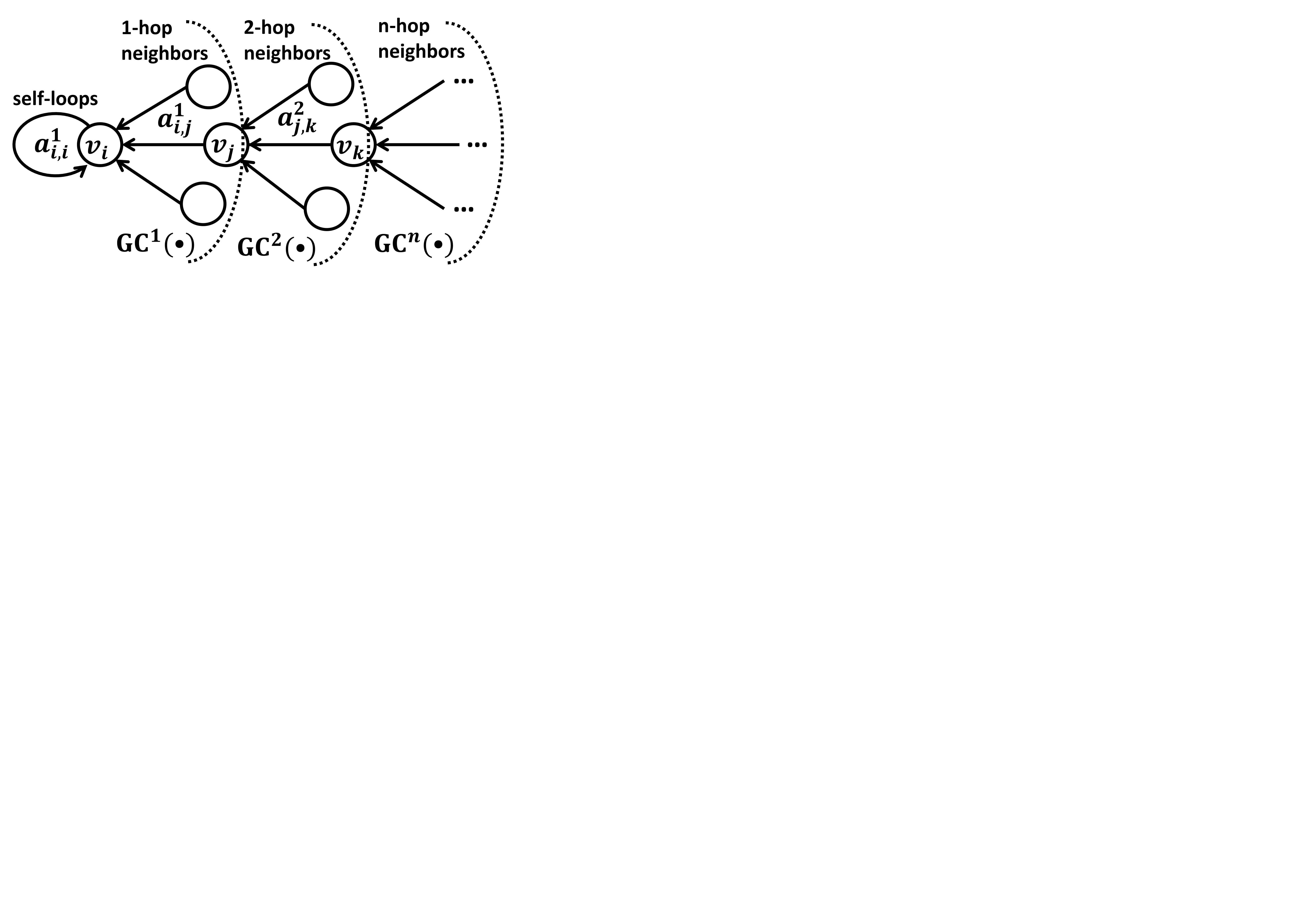}
    \vspace{-.2cm}
    \caption{Information flow on knowledge graph with respect to the entity $v_i$. Information from $n$-hop neighbors is aggregated with $n$-{th} graph convolution. Here, the aggregation number $n$ corresponds to the number of layers of the neural network ${\rm NN}^n(\cdot)$ in Theorem~\ref{th_t2}.}
    \vspace{-.2cm}
    \label{fig:gcs:infoflow}
\end{figure}
Figure~\ref{fig:gcs:infoflow} illustrates the information flow on KG with respect to $v_i$. Given a stacking of $n$ graph convolutions, the $i$-th graph convolution aggregates information from the $i$-hop neighbors. After the transformation, $v_i$ keeps $a_{i,i}^{1}$ $(0 \!<\! a_{i,i}^{1} \!<\! 1)$ of its original information, and aggregates $1\!-\!a_{i,i}^{1}$ of information from its $n$-hop neighbor entities.
For example, if $n\!=\!2$, the $2$-hop neighbor $v_k$ contributes $a_{i,j}^{1} \cdot a_{j,k}^{2}$ proportion of information to $v_i$.

We now use the information flow for interpretation, where attention coefficients on self-loops can be used to show if catastrophic remembering and catastrophic forgetting happened. For example, given entity $v_i$, if $a_{i,i}^{1} \!\approx\! 0$, it means entity $v_i$ did not keep any information of itself from the base LM in the enhanced LM, which means that catastrophic forgetting happens to $v_i$. Similarly, if $a_{i,i}^{1} \!\approx\! 1$, it means that catastrophic remembering happened to $v_i$.
Attention coefficients on the triples can be used to show if they are captured or not during KI. For example in Figure~\ref{fig:gcs:infoflow}, if $a_{i,j}^{1} \!\approx\! 1$, it means that from the base LM to the enhanced LM, entity $v_i$ became more closely associated with $v_j$. This indicates that the knowledge triple $(v_i, r, v_j)$ is newly integrated during KI. 
In our experiment, we regard knowledge triples with $a^{1}_{i,j} > 0.1$ as integrated triples, and others as non-integrated ones. Entities with $a^{1}_{i,i}<0.1$ imply that catastrophic forgetting happened, and correspondingly, $a^{1}_{i,i}>0.9$ implies that catastrophic remembering happened.\footnote{Users may choose different thresholds. We heuristically set these thresholds in our analysis for computational reasons.}

\section{Graph Convolution Simulator}\label{section_gcs}
So far, we have shown that there exists an interpretable transformation that can simulate KI theoretically. 
Next, we describe how to optimize the transformation (e.g., the MLP layers, and graph convolution layers) and introduce a practical implementation of our final GCS probe model. 

To implement the probe in practice, we make three approximations in GCS: two in the model design and one in optimization. We design experiments in the next section to make sure that GCS works well empirically.
The transformation as described in Theorem~\ref{th_t2} is implemented by bijective MLP layers and graph convolutional layers with attention. We show that if the weight matrix in MLP is a square matrix, the function is bijective. Formal description and proof are in Appendix~\ref{appendix:bijection}. For graph convolutions with attention, we use approximated graph attention filters similar to~\citep{agnn_thekumparampil18, gat_velickovic18} (Approximation 1). 
Then, we assume that the knowledge being integrated in the LM can be expressed as triples. 
In other words, we do not consider multi-hop knowledge triples (e.g., the $2$-hop knowledge triple $(v_i, r, v_k)$ in Figure~\ref{fig:gcs:infoflow}) in KI. Thus, we design GCS on the simple condition, where there is only one graph convolution operation (Approximation 2). Therefore, GCS is designed as two bijective MLP layers and one graph attention layer in between as:
\begin{equation}
    \GCS_{\theta_1}(\cdot) = \MLP({\rm GC}(\MLP(\cdot))), \nonumber
\end{equation}
where $\MLP(\cdot)$ is the bijective MLP layer and ${\rm GC}(\cdot)$ is the graph convolutional layer with attention on $\mathcal{G}$. Given an entity $v_i$ and its neighbors $\mathcal{N}_{v_i}$, we can write the graph convolutional layer as\footnote{For notation simplicity, we define $a_{i,j} \coloneqq a^{1}_{i,j}$.}:
\begin{equation}\small
\begin{aligned}
	{\rm GC}(\vx_i) & = \sigma \left( \sum_{v_j \in \mathcal{N}_{v_i}\cup \{v_i\} }  a_{i,j} \mW^{V} \vx_j \right), \nonumber \\
	a_{i,j} & = \text{softmax}\left( \frac{ ({\mW^{Q} \vx_i}) \cdot ({\mW^{K} \vx_j}) }{  t } \right). \nonumber
\end{aligned}
\end{equation}
Here, $\vx_i$ is the entity representation of $v_i$ from the base LM. The activation function $\sigma(\cdot)$ is ${\rm ELU}(\cdot)$, and $\mW^{V}$ is a weight matrix. $a_{i,j}$ is the attention coefficient on the relation that connects $v_i$ and $v_j$. $\mW^{Q}$ and $\mW^{K}$ are two parameter matrices in the graph attention. $\text{softmax}(\cdot)$ is the edge-wise softmax function with respect to node $v_i$. The temperature $t$ is a hyperparameter that controls the attention distribution to be hard or soft. 

We optimize GCS using the MI between its outputs and the entity representations from the enhanced LM:
\vspace{-2mm}
\begin{equation}\label{eq:gcsobj}
    \mathcal{L}=-\MI({\GCS_{\theta_1}}(\rvx);\rvh).
\end{equation}
In practice, we maximize the compression lemma lower bound of MI instead as introduced in~\citet{mine_belghazi18}.
More details can be found in Appendix~\ref{appendix:gcs_param}.
In practice, there may be a gap between the ground-truth MI and the compression lemma lower bound, and a stochastic optimization (e.g., Adam~\citep{adamoptimizer_kingma15}) may not converge to the optimal point. Thus, GCS may not fit $f''(\rvx, \rvg)$ perfectly (Approximation 3).

\section{Experiments}\label{section:exp}
We begin by reviewing ERNIE ~\citep{zhang-etal-2019-ernie} and K-Adapter ~\citep{wang-etal-2021-k}. Knowledge is integrated in \textit{entity-wise} manner in ERNIE, and \textit{triple-wise} manner in K-Adapter. 
%

\begin{figure*}[!htbp]
	\centering
	\subfigure{\includegraphics[scale=.15]{./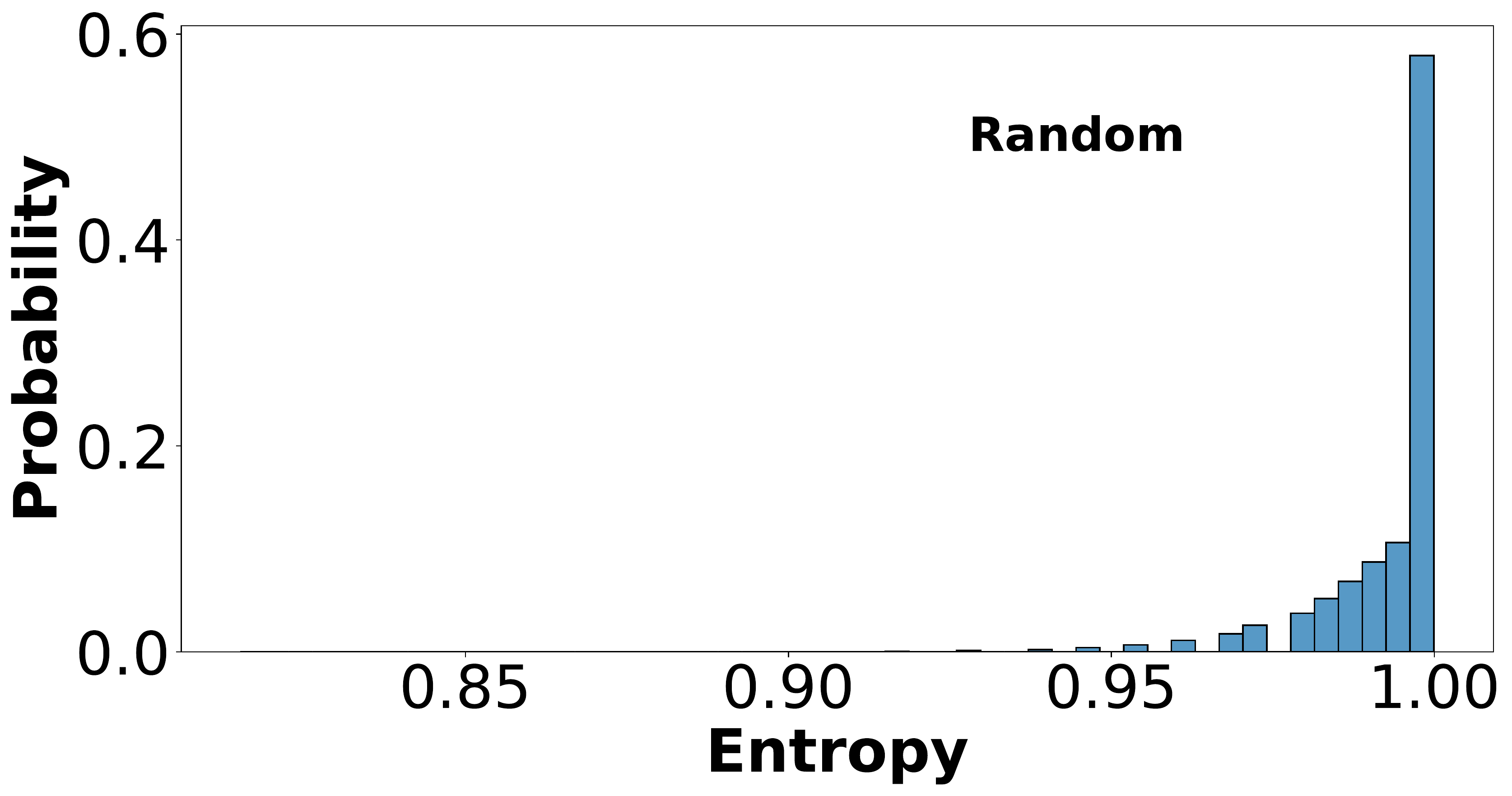}}
	\vspace{-.2cm}
	\subfigure{\includegraphics[scale=.15]{./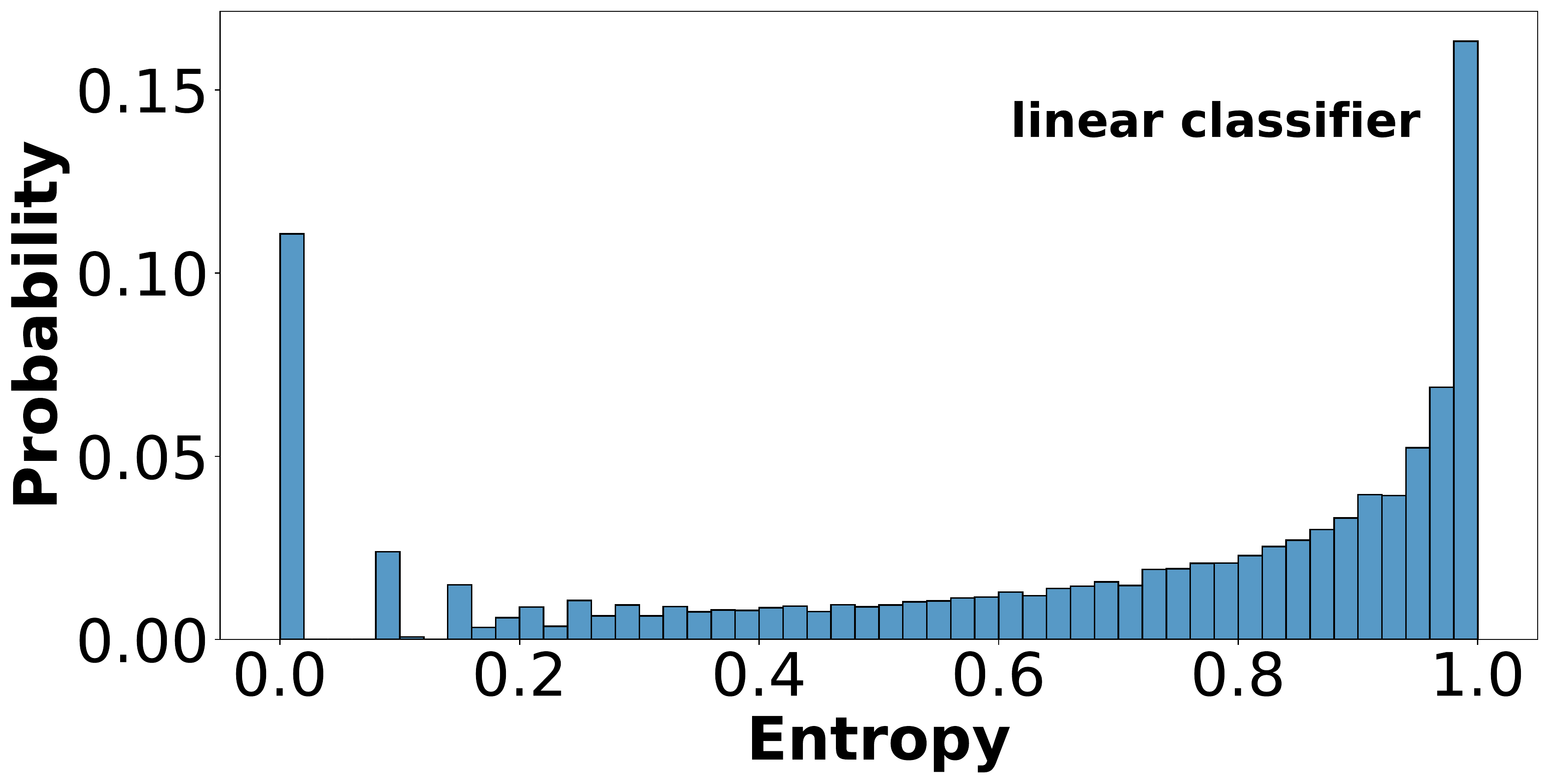}}
	\subfigure{\includegraphics[scale=.15]{./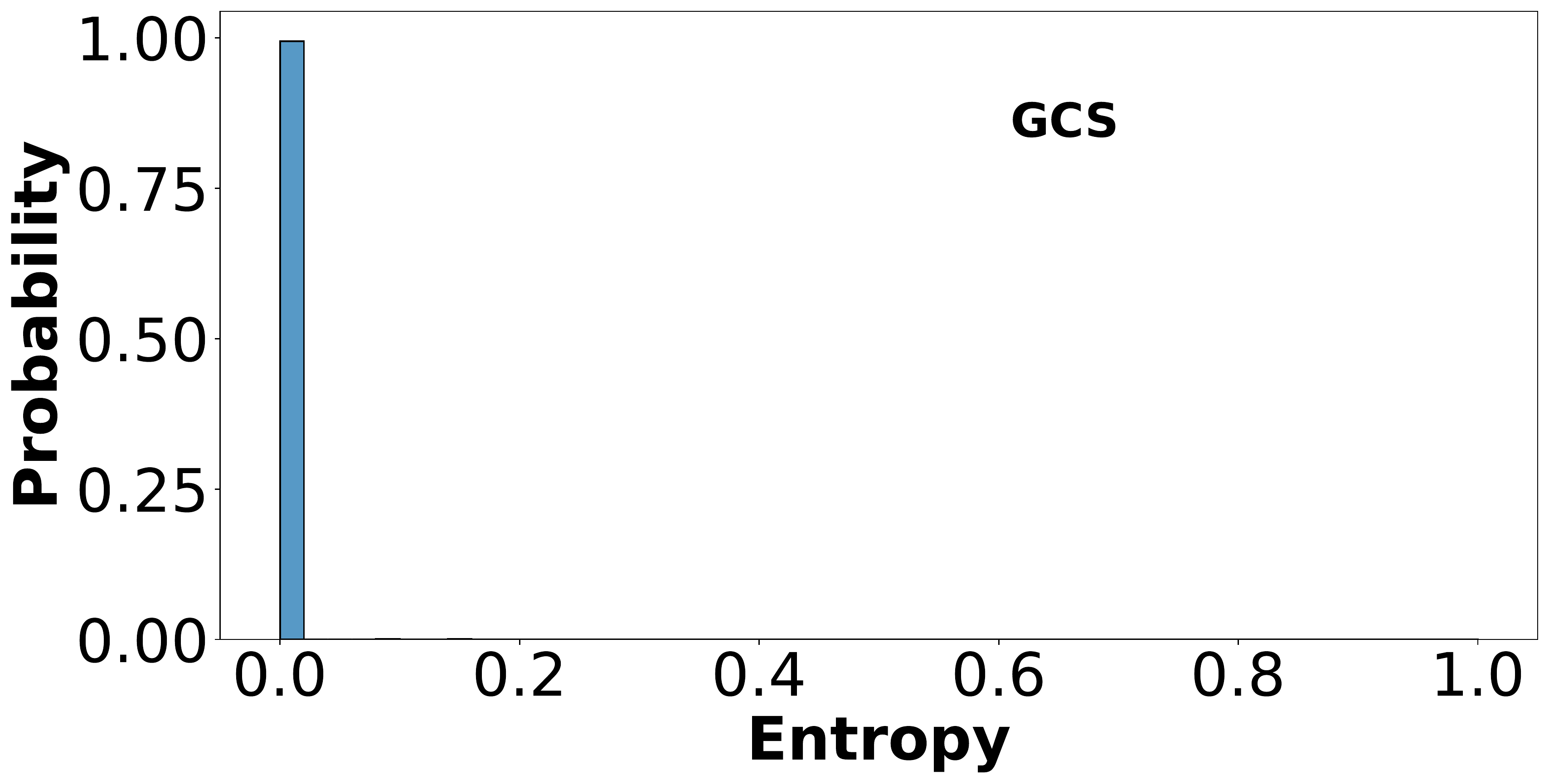}}
	\caption{The histogram of entropy for all entities (K-Adapter), where $x$-axis shows the entropy value and $y$-axis shows the empirical probability (i.e., frequency) of entities. The random guess strategy is also included for comparison. We can find that the KI interpretation results using linear probes have fairly large variance, which are similar to the results of random guessing. But our GCS can provide stable interpretations.}
	\vspace{-.2cm}
	\label{fig:probe_var}
\end{figure*}

\paragraph{ERNIE.}
ERNIE integrates knowledge into BERT~\citep{devlin-etal-2019-bert} using a Wikipedia corpus and Wikidata KG. 
As no alignment is provided between sentences in the Wikipedia corpus and entities in the Wikipedia KG, ERNIE uses TAGME~\citep{ernie_tagme_ferragina10} to extract entity mentions in sentences and aligns them with corresponding entities in KGs. A new objective is designed for KI in addition to the standard MLM and NSP objectives: alignments in the input text are randomly masked, and the model is asked to select aligned entities from KGs. When ERNIE finds the aligned entity, its KG embedding obtained from~\citet{transe_kge_bordes13} is integrated into the hidden representations of BERT. 

\paragraph{K-Adapter.}
K-Adapter takes RoBERTa~\citep{lm_roberta_liu} as the base LM and inserts three new layers into RoBERTa to integrate factual knowledge. The final output is concatenated with the output of RoBERTa.\footnote{Note that here we only consider factual knowledge, thus, the linguistic Adapter is not used.} During the integration, parameters of RoBERTa are frozen, only parameters in the adapter are updated. K-Adapter uses the T-REx-rc~\citep{elsahar-etal-2018-rex} dataset for KI, which has an alignment of sentences with knowledge triples in Wikidata.
For the KI objective, K-Adapter decides whether certain relations exist or not, and classifies relation labels given the aligned sentence. 

\subsection{GCS Verification}\label{exp:verification}
Next, we design a set of experiments to show that GCS can provide reasonable interpretations for KI. We first compare GCS with the linear probe with respect to variance of the interpretation results (\S\ref{exp:verify:variance}). We show that the linear probe does not work in interpreting KI, but GCS can provide stable interpretations. 
Then, we verify GCS based on the KG used in KI (\S\ref{exp:verify:kd_dataset}). We drop knowledge triples that are identified as \textit{non-integrated} by GCS during KI and show that it would not affect the performance of enhanced LMs. 
Third, we verify GCS based on the downstream tasks (\S\ref{exp:verify:downstream_task_performance}). We show that enhanced LMs perform well on the test samples that contain \textit{integrated} knowledge triples that are identified by GCS, and vice versa.

\subsubsection{Variance of interpretation results}\label{exp:verify:variance}
As alluded to earlier, linear probes do not work for large-scale factual knowledge interpretation. We support this claim by evaluating the variance of interpretation results. We use the entropy of probe results to test if the interpretation is stable. 

\paragraph{Setting.} We run a linear probe and GCS to detect if a knowledge triple is \textit{integrated} or \textit{non-integrated}  $100$ times with different random seeds.\footnote{We use $2$-fold cross validation for interpretation, where knowledge triples are randomly split into two even parts for training and testing.} We calculate the entropy of these results to estimate the probe variance. 
If a triple is classified as ``unlearned'' in the base LM but ``learned'' in the enhanced LM, we regard it as \textit{integrated} and if it is classified as ``unlearned'' in both base LM and enhanced LM, we say the triple is \textit{non-integrated}.
For GCS, as introduced in \S\ref{section:gcs:ga}, if the attention coefficient for the triple is larger than $0.1$, it is deemed as \textit{integrated}, else not.

\paragraph{Results.} A histogram of the entropy of all the entities is shown in Figure~\ref{fig:probe_var}. 
For a random guess strategy, the entropy of most knowledge triples is around $1$, which means the results are highly unstable. For the linear probe, we find that only $10\%$ of knowledge triples are interpreted in a stable manner. The other $90\%$ triples have large entropy (their interpretation is similar to random guessing).
On the other hand, for GCS, 
most knowledge triples have stable interpretation results. This shows that GCS can indeed provide more reliable interpretations.

\subsubsection{Verification via the KI corpus}\label{exp:verify:kd_dataset}
As our second verification experiment, we only use the factual knowledge that is identified as \textit{integrated}
by GCS to enhance BERT and RoBERTa to get ERNIE (drop-UE) and K-Adapter (drop-UE). 
Then we judge if the GCS interpretation was reasonable using two downstream tasks, where enhanced LMs outperform base LMs most significantly.
If GCS can interpret the KI process well, the performance of the drop-UE versions should be roughly the same as that of ERNIE/K-Adapter.

\paragraph{Setting.} This experiment comprises of three steps. First, we use GCS to interpret the KI process in ERNIE and K-Adapter, and identify triples or entities that are integrated successfully. Second, we re-enhance BERT/RoBERTa to get ERNIE (drop-UE) / K-Adapter (drop-UE) only using the entities/triples that are identified as \textit{integrated}. Third, we finetune ERNIE/K-Adapter and their drop-UE versions on two downstream tasks.

After we get our interpretation results, we only keep the KI corpus aligned with the \textit{integrated} knowledge.\footnote{We only keep $61.72\%$ entity embeddings (obtained by~\citet{transe_kge_bordes13}) for ERNIE (drop-UE), and $10.09\%$ KI corpus (i.e., natural sentences) for K-Adapter (drop-UE). Detailed statistics are in Table~\ref{tab:drop} in Appendix~\ref{appendix:additional_statistics}.} We finetune the models on two downstream tasks about entity typing: OpenEntity~\citep{choi-etal-2018-ultra} and FIGER~\citep{ling-etal-2015-design}. Implementation details of GCS, ERNIE, and K-Adapter can be found in Appendix~\ref{appendix:gcs_param} and Appendix~\ref{appendix:lm_param}.

\begin{table}[!htbp]
	\caption{Performance of ERNIE, K-Adapter and their drop-UE versions on the entity typing downstream tasks. We can find that dropping a large amount of non-integrated knowledge would not affect enhanced LMs' performance much on knowledge-intensive downstream tasks.}
	\vspace{-.2cm}
	\smallskip
	\label{tab:verify_input}
	\centering
	\resizebox{1.\columnwidth}{!}{
		\smallskip\begin{tabular}{c|ccc|ccc}
			\toprule
			\multirow{2}*{Model} & \multicolumn{3}{c}{OpenEntity} & \multicolumn{3}{c}{FIGER} \\
			\cline{2-7}
			~ & P & R & F1-Micro & P & R & F1-Micro  \\
			\midrule
            ERNIE & \textbf{78.24} & 68.75 & 73.19 & \textbf{77.39} & \textbf{65.81} & \textbf{71.13} \\
            ERNIE (drop-UE) & 78.11  & \textbf{71.43}  & \textbf{74.62} {\color{red} $\uparrow$} & 77.38  & 64.90 & 70.60 {\color{green} $\downarrow$} \\
            \hline
            K-Adapter & \textbf{76.63} & 75.26 & 75.94 & \textbf{67.50} & 88.79 & \textbf{76.69} \\
            K-Adapter (drop-UE) & 75.95  & \textbf{75.95} & \textbf{75.95} {\color{red} $\uparrow$} & 67.29  & \textbf{88.88}  & 76.59 {\color{green} $\downarrow$} \\
			\bottomrule
			\hline
		\end{tabular}
	}	
	\vspace{-.2cm}
\end{table} 
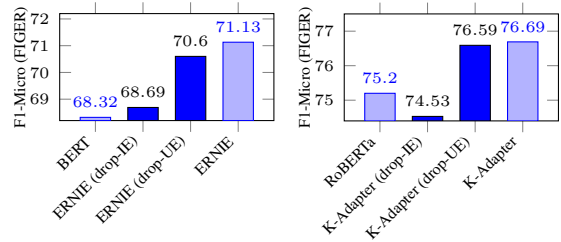
\begin{figure}[!htbp]
	\centering
	\pgfplotsset{width=4.4cm, height=3.cm}
	\vspace{-.2cm}
	\subfigure{
		\begin{tikzpicture}[font=\tiny]
		\begin{axis}[ybar, bar shift=0pt, bar width=0.4cm, enlarge x limits=0.25,
		            x tick label style={rotate=45, anchor=east, align=left},
                    y label style={at={(axis description cs:-0.1,0.5)},anchor=south},
                    ylabel={F1-Micro (FIGER)}, 
                    symbolic x coords={BERT, \quad\,\,\, ERNIE (drop-IE), \quad ERNIE (drop-UE), ERNIE}, 
                    xtick={BERT, \quad\,\,\, ERNIE (drop-IE), \quad ERNIE (drop-UE), ERNIE},
                    nodes near coords, 
                    nodes near coords align={vertical},
                    ymin=68.2, ymax=72.2]
		\addplot coordinates {(BERT, 68.32) (ERNIE, 71.13)};
		\addplot[fill=blue] coordinates {(\quad\,\,\, ERNIE (drop-IE), 68.69) (\quad ERNIE (drop-UE), 70.60)};
		\end{axis}
		\end{tikzpicture}
	}%
	\subfigure{
		\begin{tikzpicture}[font=\tiny]
		\begin{axis}[ybar, bar shift=0pt, bar width=0.4cm, enlarge x limits=0.25,
		            x tick label style={rotate=45, anchor=east, align=left},
                    y label style={at={(axis description cs:-0.1,0.5)},anchor=south},
                    ylabel={F1-Micro (FIGER)}, 
                    symbolic x coords={RoBERTa, K-Adapter (drop-IE), K-Adapter (drop-UE), K-Adapter}, 
                    xtick={RoBERTa, K-Adapter (drop-IE), K-Adapter (drop-UE), K-Adapter},
                    nodes near coords, 
                    nodes near coords align={vertical},
                    ymin=74.4, ymax=77.5]
		\addplot coordinates {(RoBERTa, 75.20) (K-Adapter, 76.69)};
		\addplot[fill=blue] coordinates {(K-Adapter (drop-IE), 74.53) (K-Adapter (drop-UE), 76.59)};
		\end{axis}
		\end{tikzpicture}
	}
	\vspace{-.6cm}
	\caption{Performance of BERT, RoBERTa, ERNIE, K-Adapter, and their dropped versions on the FIGER dataset. We can find that even if there are negative effects on the performance, they are marginal that can be ignored.}
	\vspace{-.4cm}
	\label{fig:verify_input}
\end{figure}

\paragraph{Results.}
From Table~\ref{tab:verify_input}, we find that even if we drop a large amount of KI data in this way, the performance of drop-UE versions on entity typing task is roughly the same as original versions. For the OpenEntity dataset, better performance is achieved. For the FIGER dataset, the performance of drop-UE versions is slightly worse. We also report the performance of BERT, RoBERTa, and two drop-IE\footnote{The drop-IE versions mean that we drop the subset aligned with integrated factual knowledge identified by GCS and use the rest to enhance BERT and RoBERTa .} versions in Figure~\ref{fig:verify_input}. We find that compared to dropping KI corpus aligned with \textit{integrated} knowledge, dropping KI corpus aligned with \textit{non-integrated} knowledge achieves much better performance. Thus, we verify that GCS provides reasonable interpretations.

\subsubsection{Verification via the downstream task} \label{exp:verify:downstream_task_performance}
We use the performance of ERNIE and K-Adapter on 
downstream tasks to verify GCS. If GCS can reasonably interpret KI, enhanced LMs should perform better on the test set with samples aligned with the \textit{integrated knowledge}, and vice versa. 

\paragraph{Setting.} We first align entities in the KI corpus and the OpenEntity dataset based on their \textit{Wikidata Q identifier}.\footnote{\href{https://www.wikidata.org/wiki/Q43649390}{https://www.wikidata.org/wiki/Q43649390}} For the entity typing task (OpenEntity dataset), we drop samples in the finetuning test set that aligns with the \textit{integrated} knowledge and \textit{non-integrated} entities (called \textit{w/o-IE} test set and \textit{w/o-UE} test set), and test ERNIE and K-Adapter on the two dropped test sets. Detailed statistics can be found in the Table~\ref{tab:verify_output} in Appendix~\ref{appendix:additional_statistics}.

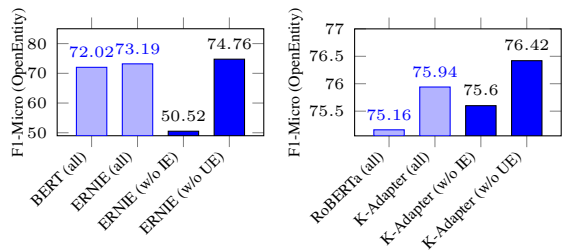
\begin{figure}[!htbp]
	\centering
	\pgfplotsset{width=4.3cm, height=3.cm}
	\vspace{-.4cm}
	\subfigure{
		\begin{tikzpicture}[font=\tiny]
		\begin{axis}[ybar, bar shift=0pt, bar width=0.4cm, enlarge x limits=0.25,
		            x tick label style={rotate=45, anchor=east, align=left},
                    y label style={at={(axis description cs:-0.1,0.5)},anchor=south},
                    ylabel={F1-Micro (OpenEntity)}, 
                    symbolic x coords={BERT (all), ERNIE (all), \quad\,\,\, ERNIE (w/o IE), \quad ERNIE (w/o UE)}, 
                    nodes near coords, 
                    nodes near coords align={vertical},
                    ymin=49, ymax=85, xtick={BERT (all), ERNIE (all), \quad\,\,\, ERNIE (w/o IE), \quad ERNIE (w/o UE)}]
		\addplot coordinates {(BERT (all), 72.02) (ERNIE (all), 73.19)};
		\addplot[fill=blue] coordinates {(\quad\,\,\, ERNIE (w/o IE), 50.52) (\quad ERNIE (w/o UE), 74.76)};
		\end{axis}
		\end{tikzpicture}
	}%
	\subfigure{
		\begin{tikzpicture}[font=\tiny]
		\begin{axis}[ybar, bar shift=0pt, bar width=0.4cm, enlarge x limits=0.25,
		            x tick label style={rotate=45, anchor=east, align=left},
                    symbolic x coords={RoBERTa (all), K-Adapter (all), K-Adapter (w/o IE), K-Adapter (w/o UE)}, 
                    y label style={at={(axis description cs:-0.2,0.5)},anchor=south},
                    ylabel={F1-Micro (OpenEntity)}, 
                    nodes near coords, 
                    nodes near coords align={vertical},
                    ymin=75.05, ymax=77, xtick={RoBERTa (all), K-Adapter (all), K-Adapter (w/o IE), K-Adapter (w/o UE)}]
		\addplot  coordinates {(RoBERTa (all), 75.16) (K-Adapter (all), 75.94)};
		\addplot [fill=blue] coordinates {(K-Adapter (w/o IE), 75.60) (K-Adapter (w/o UE), 76.42)};
		\end{axis}
		\end{tikzpicture}
	}
	\vspace{-.6cm}
	\caption{Performance of BERT, RoBERTa, ERNIE, K-Adapter on the OpenEntity dataset for different test sets (original versions and dropped versions). We can find that enhanced LMs perform better on test samples that contain successfully integrated knowledge.}
	\vspace{-.2cm}
	\label{fig:verify_output}
\end{figure}

\paragraph{Results.} As shown in Figure~\ref{fig:verify_output}, we find that for ERNIE, the difference is significant. The performance on the test set (w/o-IE) is more than $20$ F1 points worse than that on the complete test set (all). 
For K-Adapter, there is a drop in F1 on the w/o-IE set and increase in F1 on the w/o-UE set (albeit small). 
We hypothesize that this may be because of the differences in the finetuning objective and the KI objective\footnote{K-Adapter integrates knowledge in a triple-wise manner only using a small amount of adapter parameters.}, and because the knowledge integrated in K-Adapter may change during finetuning. 
These results show that GCS can reasonably interpret which set of knowledge is integrated.

\subsection{GCS Findings} \label{exp:findings}
After verifying GCS with three set of experiments, we analyze the interpretation results. We find that both ERNIE and K-Adapter integrate only few knowledge triples $(\approx 20-30\%)$. Detailed results can be found in Figure~\ref{fig:all_dist} in Appendix~\ref{appendix:additional_statistics}. 
Next, we classify the factual knowledge based on its relation types (in terms of their topology type and Wiki data type) and analyze how ERNIE and K-Adapter integrate knowledge with certain relation types.

\subsubsection{Analysis via relation topology} 
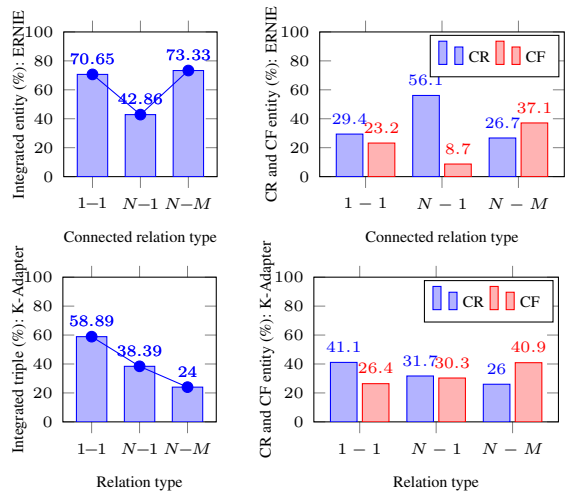
\begin{figure}[htbp]
	\centering
	\vspace{-.2cm}
	\subfigure{
	    \pgfplotsset{width=3.6cm, height=3.5cm}
		\begin{tikzpicture}[font=\tiny]
		\begin{axis}[ybar, bar shift=0pt, bar width=0.4cm, enlarge x limits=0.3,
		            xtick=data,
                    symbolic x coords={$1\!\!-\!\!1$, $N\!\!-\!\!1$, $N\!\!-\!\!M$}, 
                    y label style={at={(axis description cs:-0.17,0.5)},anchor=south},
                    xlabel={Connected relation type},
                    ylabel={Integrated entity (\%): ERNIE}, 
                    nodes near coords, 
                    nodes near coords align={vertical},
                    ymin=0., ymax=100]
		\addplot coordinates {($1\!\!-\!\!1$, 70.65) ($N\!\!-\!\!1$, 42.86) ($N\!\!-\!\!M$, 73.33)};
		\addplot [color=blue, mark=*, sharp plot] coordinates {($1\!\!-\!\!1$, 70.65) ($N\!\!-\!\!1$, 42.86) ($N\!\!-\!\!M$, 73.33)};
		\end{axis}
		\end{tikzpicture}
	}
	\vspace{-.7cm}
	\subfigure{
	    \pgfplotsset{width=4.8cm, height=3.5cm}
		\begin{tikzpicture}[font=\tiny]
		\begin{axis}[ybar, bar width=0.35cm, enlarge x limits=0.3, legend columns=-1,
		            xtick=data,
                    symbolic x coords={$1-1$, $N-1$, $N-M$}, 
                    y label style={at={(axis description cs:-0.12,0.5)},anchor=south},
                    xlabel={Connected relation type},
                    ylabel={CR and CF entity (\%): ERNIE}, 
                    nodes near coords, 
                    nodes near coords align={vertical},
                    ymin=0., ymax=100]
		\addplot coordinates {($1-1$, 29.4) ($N-1$, 56.1) ($N-M$, 26.7)};
		\addlegendentry{CR}
		\addplot coordinates {($1-1$, 23.2) ($N-1$, 8.7) ($N-M$, 37.1)};
		\addlegendentry{CF}
		\end{axis}
		\end{tikzpicture}
	}
	\subfigure{
	    \pgfplotsset{width=3.6cm, height=3.5cm}
		\begin{tikzpicture}[font=\tiny]
		\begin{axis}[ybar, bar shift=0pt, bar width=0.4cm, enlarge x limits=0.3,
                    symbolic x coords={$1\!\!-\!\!1$, $N\!\!-\!\!1$, $N\!\!-\!\!M$}, 
                    y label style={at={(axis description cs:-0.17,0.5)},anchor=south},
                    xlabel={Relation type},
                    ylabel={Integrated triple (\%): K-Adapter}, 
                    nodes near coords, 
                    nodes near coords align={vertical},
                    ymin=0., ymax=100, xtick={$1\!\!-\!\!1$, $N\!\!-\!\!1$, $N\!\!-\!\!M$}]
		\addplot coordinates {($1\!\!-\!\!1$, 58.89) ($N\!\!-\!\!1$, 38.39) ($N\!\!-\!\!M$, 24.00)};
		\addplot [color=blue, mark=*, sharp plot] coordinates {($1\!\!-\!\!1$, 58.89) ($N\!\!-\!\!1$, 38.39) ($N\!\!-\!\!M$, 24.00)};
		\end{axis}
		\end{tikzpicture}
	}
	\subfigure{
	    \pgfplotsset{width=4.8cm, height=3.5cm}
		\begin{tikzpicture}[font=\tiny]
		\begin{axis}[ybar, bar width=0.35cm, enlarge x limits=0.3, legend columns=-1,
                    symbolic x coords={$1-1$, $N-1$, $N-M$}, 
                    y label style={at={(axis description cs:-0.12,0.5)},anchor=south},
                    xlabel={Relation type},
                    ylabel={CR and CF entity (\%): K-Adapter}, 
                    nodes near coords, 
                    nodes near coords align={vertical},
                    ymin=0., ymax=100, xtick={$1-1$, $N-1$, $N-M$}]
		\addplot coordinates {($1-1$, 41.1) ($N-1$, 31.7) ($N-M$, 26.0)};
		\addlegendentry{CR}
		\addplot coordinates {($1-1$, 26.4) ($N-1$, 30.3) ($N-M$, 40.9)};
		\addlegendentry{CF}
		\end{axis}
		\end{tikzpicture}
	}
	\vspace{-.5cm}
	\caption{Analysis of KI interpretation results in terms of different relation topology. We can find that the degree of knowledge (types) integration is different for enhanced LMs using different KI methods.}
	\vspace{-.2cm}
	\label{fig:specific}
\end{figure}

We classify relations into three types based on their topology features. 
Following previous work~\citep{transe_kge_bordes13,tang-etal-2020-orthogonal}, we denote relations that connect two leaf nodes (entities) in the KG as $1\!-\!1$ relations, and relations that connect two center nodes (entities) in the KG as $N\!-\!M$ relations. Others are denoted as $N\!-\!1$ relations. An example can be found in Figure~\ref{fig:kg_relation} in Appendix~\ref{appendix:additional_statistics}.
We perform an analysis in terms of different types of relations and report the percentage of successfully integrated entities and triples for ERNIE and K-Adapter. For each relation type, we also present the percentage of connected entities that are catastrophically remembered or forgotten (CR and CF).

Figure~\ref{fig:specific} presents the results, and detailed statistics can be found in Table~\ref{tab:specific} in Appendix~\ref{appendix:additional_statistics}.
We find that for ERNIE, entities connected with complex relations (i.e., $N\!-\!M$ relations) are captured well.
However, K-Adapter shows different behaviors. It captures triples with simple relations (i.e., $1\!-\!1$ and $N\!-\!1$ relations) well. 
Note that ERNIE uses graph embeddings provided by the specially designed model~\citep{transe_kge_bordes13} for KI, while K-Adapter integrates knowledge into dedicated feedforward networks called adapters. 
This implies that structures are not well encoded into classic adapter modules, and we may need a better approach to integrate knowledge with complex structures into neural modules.
Besides, we find that for both ERNIE and K-Adapter, CR happens more often to entities in simple structures (i.e., $N\!-\!1$ relations), while CF is more common for entities in complex structures (i.e., connected to $N-M$ relations).

\subsubsection{Analysis via relation's Wiki features}
\begin{table}[!htbp]
	\caption{Analysis of KI interpretations via different relation labels for the T-REx-rc dataset (KI corpus used by K-Adapter). We can find that temporal knowledge is hard to get integrated.}
	\vspace{-.2cm}
	\smallskip
	\label{tab:label}
	\centering
	\resizebox{1.\columnwidth}{!}{
		\smallskip\begin{tabular}{c|ccc}
			\toprule
			\multirow{2}{*}{Relation label} & \multicolumn{3}{c}{T-REx-rc} \\
			\cline{2-4}
			~  & Wiki Count & Wiki data type & Integrated triple \\
			\midrule
            place of birth (LF) & 2,850,424 & Wikibase item & 10.95\% \\
            part of (LF) & 4,164,470 & Wikibase item & 17.25\% \\
            \hline
            date of death (TR) & 2,637,358 & Time & $<$0.01\% \\
            date of birth (TR) & 5,294,649 & Time & $<$0.01\%  \\
            \hline
            located in the administrative & \multirow{2}{*}{10,776,120} & \multirow{2}{*}{Wikibase item} & \multirow{2}{*}{6.13\%} \\
             territorial (HF) & ~ & ~ & ~ \\
            country (HF) & 14,174,811 & Wikibase item & 0.12\% \\
            \hline
            Total & - & - & 10.09\% \\
			\bottomrule
			\hline
		\end{tabular}
	}
	\vspace{-.2cm}
\end{table}
For further analysis, we select six relations aligned with roughly the same number of sentences in the T-REx-rc dataset\footnote{Statistics are in Table~\ref{tab:label_num} in Appendix~\ref{appendix:additional_statistics}} and categorize them into three groups based on the \textit{Wiki Count} and \textit{Wiki data type}\footnote{See the \href{https://www.wikidata.org/wiki/Wikidata:Database\_reports/List\_of\_properties/all}{Wikipedia page} for more details.}: low-frequency (LF) relations, time-related (TR) relations, and high-frequency (HF) relations.
From Table~\ref{tab:label}, we find that even if LF relations have roughly the same Wiki Count as TR relations, the temporal knowledge still cannot be integrated by K-Adapter. We speculate that this is because Transformer encoders do not capture information about time well~\citep{temporallm_dhingra21, zhou-etal-2021-temporal}. When comparing LF relations and HF relations, we find that if relations have small Wiki Count, knowledge triples are easily captured.

\begin{table}[!htbp]
	\caption{Examples of triples in the T-REx-rc dataset (KI corpus used by K-Adapter) with attention coefficients.}
	\vspace{-.2cm}
	\smallskip
	\label{tab:label_example}
	\centering
	\resizebox{1.\columnwidth}{!}{
		\smallskip\begin{tabular}{c|c}
			\toprule
			Knowledge triple & Attention coefficient  \\
			\midrule
            (Adam Smith, place of birth, Kirkcaldy) & $1.079 \times 10^{-1}$ \\
            (Lake Huron, part of, Great Lakes) & $1.742 \times 10^{-1}$\\
            \hline
            (Jean-Jacques Rousseau, date of death, 02 July 1778) & $1.729 \times 10^{-25}$ \\
            (Barack Obama, date of birth, 04 August 1961) & $6.827 \times 10^{-31}$\\
            \hline
            (Mauna Kea Observatory,  &  \multirow{2}{*}{$6.044  \times 10^{-2}$}  \\
            located in the administrative territorial, Hawaii) & ~\\
            (China, country, Mahalangur Himal) & $1.250  \times 10^{-3}$ \\
			\bottomrule
			\hline
		\end{tabular}
	}
	\vspace{-.2cm}
\end{table}
Randomly picked examples of knowledge triples are given in Table~\ref{tab:label_example}. We can find that for TR relations, they connect entities composed of numbers. The poor performance of language models in handling numbers~\citep{wallace-etal-2019-nlp} provides an alternative explanation for the observation that K-Adapter does not integrate triples with TR relations. For triples with LF and HF relations, we find that some entities connected to HF relations are very common (e.g, entity ``China'' is in the complex KG structure) compared to those connected to LF relations. These results are consistent with our findings in Figure~\ref{fig:specific} that knowledge of popular entities is not integrated.

\subsubsection{Can we further improve the KI quality?}
Finally, we attempt to answer the question: \textit{can we simply improve the quality of KI by increasing the amount of our aligned training corpora?} Intuitively, repeatedly learning a knowledge triple with several aligned sentences could increase the possibility of successful integration of this knowledge. 

\begin{figure}[!htbp]
    \centering
    \vspace{-.2cm}
    \includegraphics[scale=.24]{./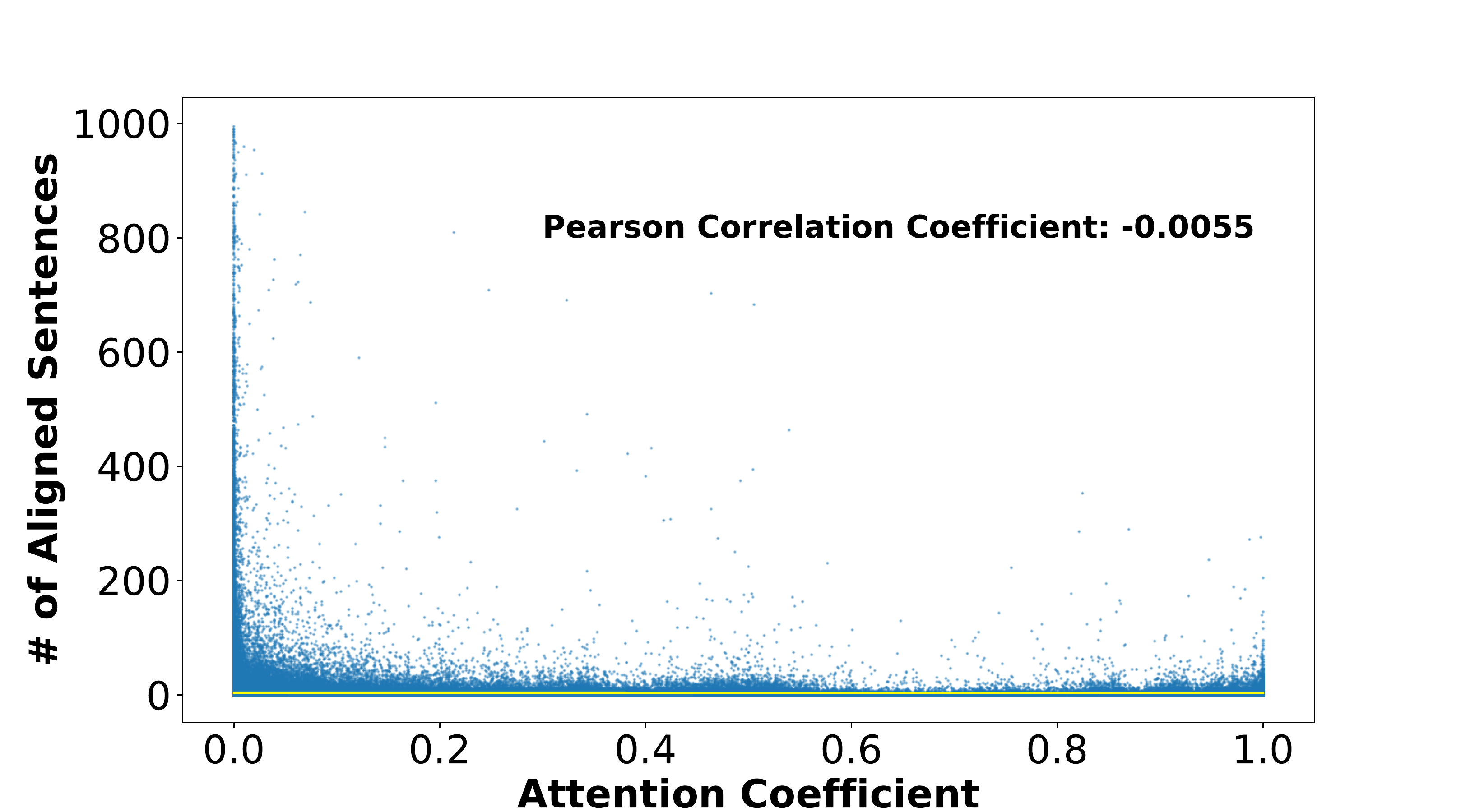}
    \caption{The correlation between the attention coefficient of the knowledge triple and its aligned sentence number in T-REx-rc dataset. We can find that there is no correlation between them, which means simply increasing the KI corpus could not be helpful for better KI quality.}
    \vspace{-.2cm}
    \label{fig:correlation}
\end{figure}

We answer this question by calculating correlation between the attention coefficients (i.e., success ratio of integration) for K-Adapter and number of aligned sentences (i.e., size of KI corpus) for knowledge triples in the T-REx-rc dataset.
Surprisingly, we find that this \textit{Pearson correlation} is $-0.0055$ (Figure~\ref{fig:correlation}). This shows that there is no apparent positive relationship between the KI quality and the size of the KI dataset. It suggests that simply increasing the size of the aligned dataset alone may not improve KI and we might need more fundamental advances to push the state-of-the-art in KI.

\section{Conclusion}
In this paper, through a series of theoretical results, we derived an information-theoretic probe that uses attention over knowledge graphs to interpret the knowledge integration process in LMs. In our experiments, we verified our probe model and used it to understand what knowledge has been integrated in two existing knowledge-enhanced language models, leading to some new findings about these models. 
We hope that our probe model would aid in better understanding and informed design of knowledge integration approaches for LMs. We have published the code and the demo to help users easily implement GCS for their own knowledge-enhanced LM interpretation.

\section*{Limitations}
There are some limitations of our work. 
We simplify the KG without considering relation information such as labels, since existing graph neural networks (e.g., R-GCN~\citep{rgcn_schlichtkrull18}) still cannot handle such large number of imbalanced distributed relations.
These can be considered by future works.
Besides, GCS only provides a way to interpret the knowledge integration. Once we have an understanding of the it, improving the integration quality still remains challenging.

\section*{Reproducibility Statement}
We have published the code, the demo, and the interpretation results. The design of GCS can be found in~\S\ref{section_gcs}. The implementation details about the knowledge integration for K-Adapter and ERNIE can be found in Appendix~\ref{appendix:lm_param}, and details about GCS can be found in Appendix~\ref{appendix:gcs_param}.

\section*{Ethics Statement}
While our probe models are not tuned for any specific real-world application, our methods could be used in sensitive contexts such as legal or health-care settings; and it is essential that any work that builds on our approaches undertake extensive quality-assurance and robustness testing before using it in their setting.

\section*{Acknowledgment}
We are grateful to the anonymous reviewers for their insightful comments and suggestions, which helped us significantly improve the paper. We also owe many thanks to Shehzaad Dhuliawala and Nico Daheim for their constructive advice on this paper. 
Yifan Hou is supported by the Swiss Data Science Center PhD Grant (P22-05). We also acknowledge support from an ETH Zurich Research grant (ETH-19 21-1) and a grant from the Swiss National Science Foundation (project \#201009) for this work.

\bibliographystyle{acl_natbib}
\bibliography{anthology,custom}

\clearpage
\appendix
\section{Notations}\label{appendix:notations}
The notations can be found in the below table.
\begin{table*}[!htbp]
	\caption{Notations and their descriptions}\smallskip
	\label{tab:notation}
	\centering
	\resizebox{1.6\columnwidth}{!}{
		\smallskip\begin{tabular}{c|c}
			\toprule
			Notation & Description \\
			\midrule
            $\mathcal{G}$ & The knowledge graph for KI \\
            $\mathcal{V}$ & The set of entities/nodes of KG \\
            $\mathcal{E}$ & The set of edges of KG \\
            $v_i$ & The entity/node indexed as $i$ in the KG \\
            $t_i$ & The entity label attached on $v_i$ \\
            ${\rm LM}(\cdot)$ & The language model, where the input is entity text, and the output is its representation \\
            $\mathcal{N}_{v_i}$ & The set of neighbors (entities/nodes) connected to $v_i$ \\
            $\mathcal{G}(v_i)$ & The local graph structure in terms of $v_i$ \\
            $\rvx$ & The random variable of the entity representation\\
            $\vx_i$ & The entity representations of $v_i$ \\
            $\rvg$ & The random variable of the local graph structure \\
            $\MI(\cdot;\cdot)$ & The mutual information between two random variables \\
            $\mA$ & The adjacency matrix of KG \\
            $|\mathcal{V}|$ & The number of entities/nodes in KG \\
            $\mathbb{R}$ & The set of real numbers \\
            $\mI$ & The identity matrix \\
            $\mD$ & The degree matrix of KG \\
            $\mL_n$ & The normalized Laplacian matrix \\
            $\text{diag}(\cdot)$ & The diagonalization operation \\
            $\mU$ & The matrix of eigenvectors \\
            $\mLambda$ & The diagonal matrix of eigenvalues \\
            $\lambda_i$ & The $i$-th eigenvalue \\
            $\mX$ & The set of entity representations in terms of $\mathcal{V}$ \\
            $C$ or $d$ & The dimension of entity representations; The number of channels \\
            $\GFT(\cdot)$ & The graph Fourier transformation \\
            $\RGFT(\cdot)$ & The inverse graph Fourier transformation \\
            $g_{\Theta}$ & The graph filter parameterized by parameter $\Theta$ \\
            $\mH$ & The entity representations given by a knowledge-enhanced LM \\
            $\rvh$ & The random variable of the entity representation given by a knowledge-enhanced LM \\
            $f(\cdot,\cdot)$ & The mapping that can transform $\rvx$ to $\rvh$ with $\rvg$\\
            $\epsilon$ & The error of the approximation \\
            ${\rm sigmoid}(\vx)$ & The Sigmoid function ${\rm sigmoid}(\cdot) = \frac{1}{1+e^{-\vx}}$ \\
            $n$ & The number of layers of the neural network for apporximation \\
            $\mW$ & The weight matrix \\
            $\vx$ & The input vector \\
            $\vb$ & The bias (in the weight matrix) \\
            $\lambda_0'$ & The minimum eigenvalue of the weight matrix $\mW$ \\
            $\MLP_b(\cdot)$ & The bijective MLP function \\
            ${\rm GC}(\cdot)$ & The graph convolution function with respect to KG $\mathcal{G}$ \\
            $\GCS_{\theta_1}$ & The GCS model parameterized by $\theta_1$ \\
            $\mathcal{L}$ & The loss function (objective) of the optimization \\
            $\mZ$ & The output of GCS, i.e., set of output entity representations \\
            $\rvz$ & The random variable of the output of GCS \\
            $\sup$ & The supremum value \\
            $T$ & A class of functions \\
            $\mathcal{F}$ & Any class of functions \\
            $\Omega$ & The domain of a function \\
            $T_{\theta_2}$ & A class of functions parameterized by $\theta_2$, i.e., neural networks \\
            $\mathbb{P}$ & The probability distribution \\
            $\mathbb{P}^{|\mathcal{V}|}$ & The empirical distribution with $|\mathcal{V}|$ samples \\
            ${\rm NN}_{\sigma}(\cdot|\theta')$ & The neural network with activation function $\sigma(\cdot)$ and parameterized by $\theta'$\\
            $|\mU|$ & The norm of matrix $\mU$ \\
            $\mA_n$ & The normalized adjacency matrix \\
            $\hat{\mX}$ & The ground-truth entity representations/node features \\
            $\hat{\mX}^{*}$ & The variable matrix \\
            $\textbf{Tr}(\cdot)$ & The trace of a matrix \\
            $\epsilon_1, \epsilon_2$ & The error bound of entity representations/node features and adjacency matrix \\
            $\gamma$ & The Lagrangian multiplier \\
            $p(t)$ & The characteristic polynomial for weight matrix $\mW$ \\
            $\text{det}(\cdot)$ & The determinant of a matrix \\
			\bottomrule
			\hline
		\end{tabular}
	}	
\end{table*}

\section{Relation Distribution}\label{appendix:relation}
We present the features of relation label distribution using two real KGs used for integration in ERNIE and K-Adapter, and we briefly illustrate that existing probe methods cannot support the relation label well.
\begin{figure}[!htbp]
    \centering
    \subfigure{\includegraphics[scale=.37]{./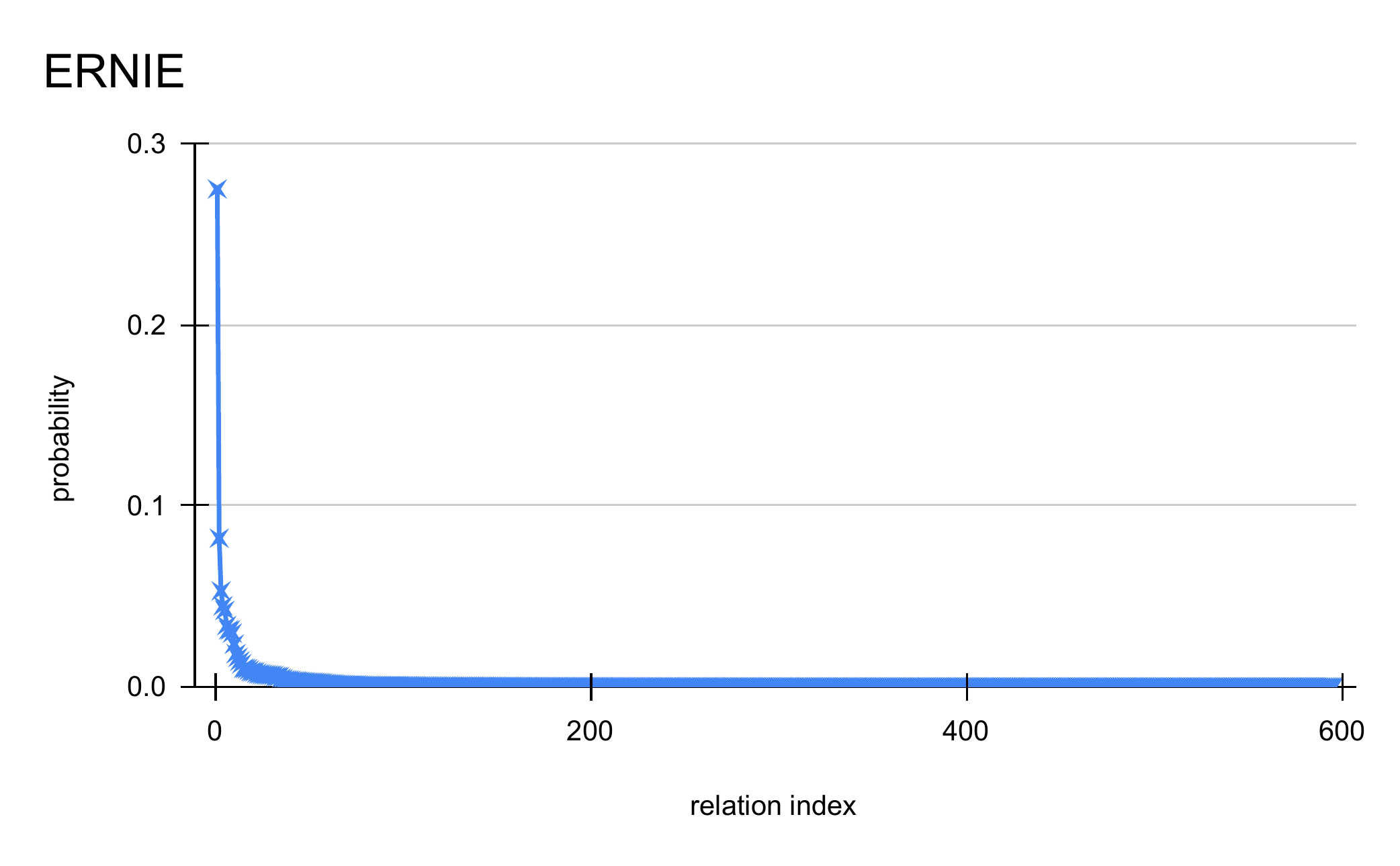}}
    \subfigure{\includegraphics[scale=.37]{./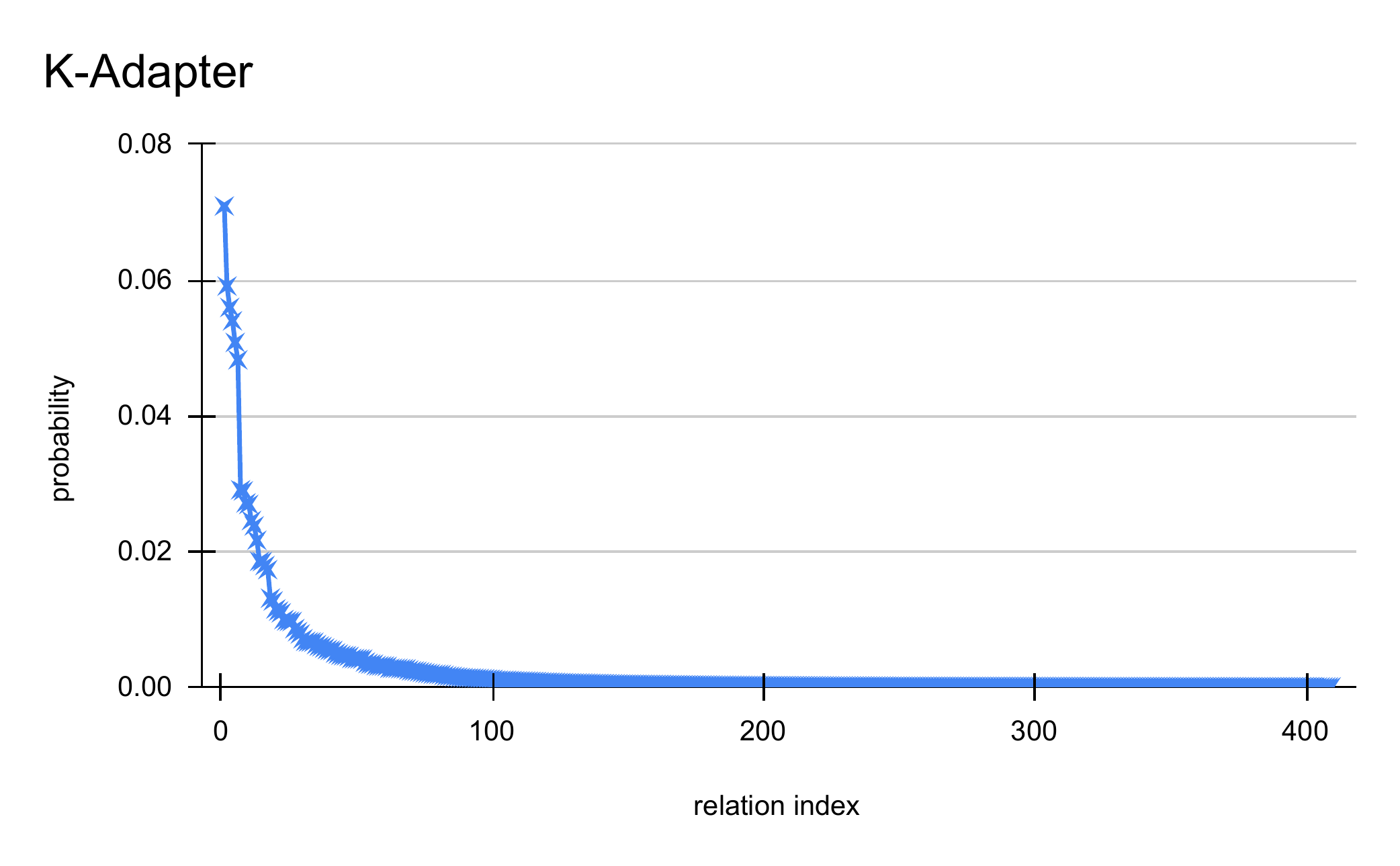}}
    \caption{The distribution of relations with respect to number in two KGs used in KI for ERNIE and K-Adapter.}
    \label{fig:relation_dist}
\end{figure}
\begin{itemize}
    \item \textbf{The number of relations is large.} The KG used in ERNIE has $594$ distinct relations and the KG used in K-Adapter has $410$ distinct relations. As for probes, if we differentiate them but use shared parameters such as the linear probe, the probe task is to predict whether the relation exist and if yes, which relation label it is. There is no wonder that using simple linear model cannot handle such difficult task with that large number of labels for classification. Regarding prompt probe, for each KG, the template for each relation should be manually created by human, and it is fairly costly.
    
    If we use different sets of parameters for different relations like in RGCN~\citep{rgcn_schlichtkrull18}. It is hard to implement hundreds or thousands of sets of parameters to analyze KI. For example, in our GCS model, we have to include attention mechanism for interpretation. The number of parameters for attention mechanism is hard to be scaled to $400-600$ times. Even if we can address this technical issue, using such as complex probe model for analysis is also problematic.
    
    \item \textbf{The distribution of relation number is highly imbalanced.} As shown in Figure~\ref{fig:relation_dist}, we can see that the distribution (i.e., PDF) of relations is very imbalanced. For ERNIE, $10\%$ relations account for $93\%$ edges, and $5$ relations (around $1\%$) account for $50\%$ edges. For K-Adapter, $10\%$ relations account for $78\%$ edges, and $5$ relations (around $1\%$) account for $29\%$ edges. Simply treating relations in different ways in interpretation could also provide problematic results. For example, in the linear probe, the simple linear model could not handle such highly imbalanced labels for classification.
\end{itemize}

\section{Formal Definitions of GFT and Graph Convolutions}\label{appendix:gft_def}
\noindent
\textbf{Formal Definition of GFT.}
Specifically, given a KG denoted as $\mathcal{G}$, let $\mA \in \mathbb{R}^{|\mathcal{V}|\times |\mathcal{V}|}$ be its symmetric adjacency matrix corresponding to $\mathcal{G}$. Let $\mL_n = \mI - \mD^{-1/2}\mA\mD^{-1/2}$ denote the normalized Laplacian matrix for $\mathcal{G}$, where $\mD$ denotes the degree matrix. We do the eigendecomposition for $\mL_n$ as $\mL_n = \mU \mLambda \mU^{T}$, where $\mU$ is the matrix of eigenvectors ordered by eigenvalues and $\mLambda = \text{diag}(\lambda_1, \lambda_2,...,\lambda_N)$ is the diagonal matrix of eigenvalues. Consider the node feature matrix as $\mX$. The GFT can be written as $\GFT(\mX) = \mU^{T}\mX$, and the inverse GFT can be written as $\RGFT(\GFT(\mX)) = \mU \GFT(\mX) = \mU \mU^{T} \mX = \mX$.

\noindent
\textbf{Formal Definition of Graph Convolutions.}
Graph convolutions~\citep{gc_npf_joan14} can be implemented by filters (i.e., kernels) as $g_{\Theta}$ in the graph spectral domain (i.e., KG space).
As the GFT of the convolution of $g_{\Theta}$ and $\mX$ is the pointwise product of their GFT~\citep{convolutiontheorem_newbold86}, the convolution can be written as 
\begin{equation}\small
    \begin{aligned}
        {\rm GC}(\mX) = g_{\Theta} \star \mX = \RGFT( g_{\Theta} \cdot  \GFT(\mX)).
    \end{aligned}\nonumber
\end{equation}
Regarding graph filters, \citet{gat_velickovic18} and \citet{agnn_thekumparampil18} introduce the attention mechanism to them, where the contribution of each edge to the convolution can be shown explicitly. Graph attention makes filters more powerful and convolutions more interpretable~\citep{gat_velickovic18,agnn_thekumparampil18,smoothness_guoji20,zheng-etal-2020-document}.

\section{Proof of Theorem~\ref{th_t1}}\label{appendix:theorem1}
\begin{proof}
    As aforementioned, the graph Fourier transformation $\GFT(\cdot)$ and its inverse transformation $\RGFT(\cdot)$ in terms of the KG $\mathcal{G}$ can be written as
    \begin{equation}\small
        \begin{aligned}
            \GFT(\mX) & = \mU^{T}\mX \\ 
            \RGFT(\GFT(\mX)) & = \mU \GFT(\mX) = \mU \mU^{T}\mX =\mX.
        \end{aligned}\nonumber
    \end{equation} 
    The second equation can be derived since $\mU$ is the set of eigenvectors of the normalized Laplacian matrix in terms of $\mathcal{G}$, which is orthogonal.

    According to the universal approximation theorem~\citep{uat_cybenko92}, in general, we can use one-layer neural networks (arbitrary width) with the sigmoid activation function to fit any functions. \citet{uat_depth_ilsang19} bound the approximation with both the width and depth, and supports more activation functions. Based on the conclusion of \citet{uat_depth_ilsang19}, we know that given a mapping $g'(\cdot)$, for any $\epsilon'>0$, there exists a neural network parameterized by $\theta'$ s.t.
    \begin{equation}\small
        \begin{aligned}
            |g'(\cdot)-{\rm NN}_{\sigma}(\cdot|\theta')| < \epsilon'.
        \end{aligned}\nonumber
    \end{equation} 
    Note that there are some constraints about the input and the model architecture, i.e., layer width. We leave out those details for simplicity since we only focus on the existence.

    Since $\rvh$ is obtained by integrating $\rvg$ into $\rvx$, we can simplify the mapping in the graph spectral space by researching on the transformation from $\GFT(\rvx)$ to $\GFT(\rvh)$.\footnote{In next proof, we illustrate that the linear transformation in the graph spectral space is graph convolution, which integrates the graph information into entities. Thus, in the graph spectral space, we do not need to regard $\rvg$ as an input. More formally description can be found in~\citet{uat_gnn_chen19,uat_gnn_keriven19}.} Assume the mapping satisfies $g'(\GFT(\rvx)) = \GFT(\rvh)$. Then we have 
    \begin{equation}\small
        \begin{aligned}
            |g'(\GFT(\rvx))-{\rm NN}_{\sigma}(\GFT(\rvx)|\theta')| < \epsilon'.
        \end{aligned}\nonumber
    \end{equation} 
    Consider that  we have  $f(\rvx, \rvg) = \rvh = \RGFT(g'(\GFT(\rvx)))$. If we assign $\epsilon'=\frac{\epsilon}{|\mU|}>0$, we have 
    \begin{equation}\small
        \begin{aligned}
            |\mU|\cdot|g'(\GFT(\rvx)) - {\rm NN}_{\sigma}(\GFT(\rvx)|\theta')|<\epsilon.
        \end{aligned}\nonumber
    \end{equation} 
    Since we know that 
        \begin{equation}\small
        \begin{aligned}
            \mU \cdot g'(\GFT(\rvx)) = \RGFT(g'(\GFT(\rvx))) = \rvh = f(\rvx, \rvg),
        \end{aligned}\nonumber
    \end{equation}
    we have
    \begin{equation}\small
        \begin{aligned}
            & |f(\rvx, \rvg) - \RGFT({\rm NN}(\GFT(\rvx)))| \\ 
            & <|\mU|\cdot|g'(\GFT(\rvx)) - {\rm NN}_{\sigma}(\GFT(\rvx)|\theta')|<\epsilon,
        \end{aligned}\nonumber
    \end{equation}
    where ${\rm NN}(\cdot)$ is parameterized by $\theta'$ with activation function $\sigma$ as ${\rm NN}_{\sigma}(\cdot|\theta')$. And without loss of generality, we assume it is composed of $n$ layers. 
\end{proof}

\section{Proof of Theorem~\ref{th_t2}}\label{appendix:theorem2}
According to the invariance property of MI~\citep{invariance_mi_kraskov04}, the introduction of bijective functions does not introduce any new information -- MI remains unchanged upon the introduction of bijective functions. We know that GFT and RGFT are both bijective (Appendix~\ref{appendix:proposition_simulation_1}). We show that nonlinear activation functions in a neural network (e.g., ${\rm sigmoid}(\cdot)$) are bijective as well (Appendix~\ref{appendix:proposition_simulation_1}). Thus, the MI change in the KI process can only happen in the linear function (Appendix~\ref{appendix:proposition_simulation_2}).
Based on the convolution theorem~\citep{convolutiontheorem_newbold86}, linear functions in graph spectral domain are graph convolution operations~\citep{spectrum_smoothness_sandryhaila14,gc_npf_joan14,gcn_thomas17} (Appendix~\ref{appendix:proposition_simulation_3}). 
Consider that the graph attention can show how the information flow on graphs during the convolution~\citep{zheng-etal-2020-document,smoothness_guoji20} (Appendix~\ref{appendix:proposition_simulation_4}). Thus, we can use graph convolutions in the transformation to interpret the KI process.

\begin{proof}
We present the proof with 4 steps below.

\subsection{Step 1}\label{appendix:proposition_simulation_1}
\textbf{$\GFT(\cdot)$, $\RGFT(\cdot)$, and ${\rm sigmoid}(\cdot)$ are bijective.} 
Given two entity representations $\vx_i$, $\vx_j$ and the matrix of eigenvectors of the KG as $\mU$, suppose that $\GFT(\vx_i) = \GFT(\vx_j)$. Then, we have 
\begin{equation}\small
    \begin{aligned}
        \mU^{T}\vx_i = \mU^{T}\vx_j.
    \end{aligned}\nonumber
\end{equation}
Since $\mU^{T}$ are set of eigenvectors and are by definition nonzero, we have 
\begin{equation}\small
    \begin{aligned}
        \vx_i=\vx_j.
    \end{aligned}\nonumber
\end{equation}
If $\vx_i=\vx_j$, it is easy to get $\GFT(\vx_i) = \GFT(\vx_j)$. Thus, graph Fourier transformation is bijective.

As for the nonlinear activation function, since we consider neural networks composed of MLP layers, the activation function is ${\rm sigmoid}(\cdot)$ function. It is easy to find that its inverse function is $f(\vy) = \ln(1-\frac{1}{\vy})$. Similarly, we can prove that it is bijective as well.

\subsection{Step 2}\label{appendix:proposition_simulation_2}
\textbf{Information gain and loss can only happen in the linear function in the graph spectral domain.} 
Without the loss of generality, we set the anchor random variable as $\rvg$. Same result can be derived using any other random variables.
Based on the invariance of MI~\citep{invariance_mi_kraskov04}, we have
\begin{equation}\small
\begin{aligned}
    \MI(\rvx; \rvg) & = \MI(\GFT(\rvx); \rvg), \\ 
    \MI(\rvx; \rvg) & = \MI(\RGFT(\rvx); \rvg), \\
    \MI(\rvx; \rvg) & = \MI({\rm sigmoid}(\rvx); \rvg).
\end{aligned}
\label{eq:proof:simulation}
\end{equation}
Since we know that 
\begin{equation}\small
\begin{aligned}
    \MI(\rvh; \rvg) - \MI(\rvx; \rvg) > 0,
\end{aligned}\nonumber
\end{equation}
and the neural network can well approximate the mapping, we have 
\begin{equation}\small
\begin{aligned}
    & \MI(\rvh; \rvg) - \MI(\rvx; \rvg) \\
    &\cong \MI(\RGFT({\rm NN}(\GFT(\rvx))); \rvg) - \MI(\rvx; \rvg) \\
    & = \MI({\rm NN}(\GFT(\rvx)); \rvg) - \MI(\GFT(\rvx); \rvg) >0.
\end{aligned}\nonumber
\end{equation}
If we write ${\rm NN}(\cdot)$ with $n$ MLP layers as $n\times \sigma({\rm Linear}(\cdot))$, we have
\begin{equation}\small
\begin{aligned}
    &\MI({\rm NN}(\GFT(\rvx)); \rvg) - \MI(\GFT(\rvx); \rvg) \\
    &= \MI(n\times \sigma({\rm Linear}(\GFT(\rvx)); \rvg) - \MI(\GFT(\rvx); \rvg) .
\end{aligned}\nonumber
\end{equation}
Recursively with equations~\ref{eq:proof:simulation}, it is easy to get that MI only changes in the ${\rm Linear}(\cdot)$ functions. And if we can show that linear function in the graph spectral domain is the graph convolution operation, we can then easily get that 
\begin{equation}\small
\begin{aligned}
    &\MI(n\times \sigma({\rm Linear}(\GFT(\rvx)); \rvg) \\
    & = \MI(n\times \MLP_b({\rm GC}(\MLP_b(\rvx)); \rvg).
\end{aligned}\nonumber
\end{equation}

\subsection{Step 3}\label{appendix:proposition_simulation_3}
\textbf{The linear function in the graph spectral domain is the graph convolution operation.}
Even if many existing works~\citep{spectrum_smoothness_sandryhaila14,gc_npf_joan14,gcn_thomas17} have provided clear descriptions, we simply re-illustrate it under the multi-channel setting. Consider the graph filter in~\citet{gc_npf_joan14} as an exmaple.

For a linear function $f(\vx) = \mW \times \vx$, its weight matrix $\mW \in \mathbb{R}^{F\times C}$ is parameterized by $\Theta \in \mathbb{R}^{F\times C}$. If the parameters are not shared for all nodes, the input $\mX \in \mathbb{R}^{|\mathcal{V}|\times C}$ can be rescaled in $\mathbb{R}^{|\mathcal{V}|\times C\times 1}$, and the weight matrix is $\mW \in \mathbb{R}^{|\mathcal{V}| \times F\times C}$ parameterized by $\Theta \in \mathbb{R}^{F\times C \times |\mathcal{V}|}$. The output of this linear function is mapped in $\mathbb{R}^{|\mathcal{V}|\times F}$.

Consider the signal in graph convolution, i.e., all $\vx$ in $\mX \in \mathbb{R}^{|\mathcal{V}|\times C}$. Since parameters are not shared~\citep{gc_npf_joan14}, for one graph filter, the parameters in $g_{\Theta}$ is in $\mathbb{R}^{C\times|\mathcal{V}|\times|\mathcal{V}|}$ that is parameterized by $\Theta \in \mathbb{R}^{C\times|\mathcal{V}|}$ with simple diagonalization. If we have $F$ different graph filters for the convolution, $g_{\Theta}$ is in $\mathbb{R}^{F\times C\times|\mathcal{V}|\times|\mathcal{V}|}$ that is parameterized by $\Theta \in \mathbb{R}^{F\times C\times|\mathcal{V}|}$. Here, the graph Fourier transformation of $\mX$ is $\GFT(\mX) \in \mathbb{R}^{|\mathcal{V}|\times C}$, which can be rescaled in $\mathbb{R}^{1\times |\mathcal{V}| \times C \times 1}$ with simple diagonalization. The output is in $\mathbb{R}^{F\times |\mathcal{V}|\times |\mathcal{V}|\times 1}$. Note that since the parameters in the graph filter is diagonalized, we can rescale the output in $\mathbb{R}^{|\mathcal{V}|\times F}$. 

If we regard the weight matrix $\mW$ as the parameters in the graph filter $g_{\Theta}$, the input matrix $\mX$ as the signal, obviously, the linear function in the graph spectral space is the graph convolution operation.

\subsection{*Step 4}\label{appendix:proposition_simulation_4}
\textbf{Graph attention can show how information flows on the graph.} 
Graph attention works as denoising information from neighbors, since it can adaptively learn the optimal weights (i.e., attention coefficients) for different neighbors. If the node features of a neighbor contain much useless information for the center node, the learned weight should be small to denoise that information. It can show how information (i.e., node features) flows among nodes on the KG structure.

Consider a graph signal denoising problem that we aim to extract the ground-truth node features $\hat{\mX}$ and edge weights $\hat{\mA}_n$ from a graph $\mathcal{G}=(\mathcal{V}, \mathcal{E}, \mA_n)$ with noise in both node features $\mX$ and edge weights $\mA_n$. Here, $\mA_n$ is the normalized adjacency matrix $\mA_n = \mD^{-1/2}\mA\mD^{-1/2}$. To this end, we formulate the optimization problem under the assumption that the ground-truth node features $\hat{\mX}$ are smooth w.r.t the ground-truth adjacency matrix $\hat{\mA}_n$ and the noise in the graph can be upper-bounded:
\begin{equation}\label{app-eq1}\small
	\begin{aligned}
		\hat{\mX}^*, \hat{\mA}_n^* = {} & \underset{\hat{\mX}, \hat{\mA}_n}{\text{argmin}}\mathrm{Tr}\left(\hat{\mX}\hat{\bm{L}}_n^{T}\hat{\mX}\right) \\
		\text{s.t. } {} & \|\hat{\mX} - \mX\|_2^2 \leq \epsilon_1, \\
		& \|\hat{\mA}_n - \mA_n\|_2^2 \leq \epsilon_2,
	\end{aligned}
\end{equation}
where $\hat{\mL} = \mI - \hat{\mA}$, $\epsilon_1, \epsilon_2 \in \mathbb{R}$, are the level of noise in node features and edge weights, respectively. $\mathrm{Tr}(\cdot)$ indicates the trace of a matrix. By Lagrange multipliers methods, we can obtain the solution as following:
\begin{equation}\label{app-eq2}\small
	\hat{\mX}^* = \frac{\gamma}{1+\gamma}\left(\mI - \frac{1}{1+\gamma}\hat{\mA}_n^*\right),
\end{equation}
\begin{equation}\label{app-eq3}\small
	\hat{\mA}_n^* = \mA_n + \sqrt{\epsilon_2}\frac{\hat{\mX}^*\hat{\mX}^{*\top}}{\|\hat{\mX}\|_2^2},
\end{equation}
where $\gamma > 0$ is the Lagrangian multiplier. Note that the attention coefficients of GAT~\citep{gat_velickovic18} and AGNN~\citep{agnn_thekumparampil18} are obtained by (without less of generality, we show the results in the first-layer) \eqref{app-eq4} and \eqref{app-eq5}, respectively:
\begin{equation}\label{app-eq4}\small
	a_{i,j} = \text{softmax}\left(\text{LReLU}\left(\mathbf{a}^\top\left[\bm{WX}_i\|\bm{WX}_j\right]\right)_{j \in \mathcal{N}_i \cup \{i\}}\right),
\end{equation}
\begin{equation}\label{app-eq5}\small
	a_{i,j} = \text{softmax}\left(\left[\beta\frac{\bm{H}_i^\top\bm{H}_j}{\|\bm{H}_i\|\|\bm{H}_j\|}\right]_{j \in \mathcal{N}_i \cup \{i\}}\right),
\end{equation}
where $\text{LReLU}$ is the leakyReLU; $\bm{H} = \text{ReLU}(\bm{XW})$, $\mathbf{a}$, $\bm{W}$ in \eqref{app-eq4}, and $\beta$, $\bm{W}$ in \eqref{app-eq5} are learnable parameters. The attention coefficents of GAT and AGNN are then used as the weights of aggregating the neighbohood information of nodes. As we can see that \eqref{app-eq3}, \eqref{app-eq4}, and \eqref{app-eq5} are in a form of measuring the similarity between paired node features. Similar to the denoised edge weights obtained in \eqref{app-eq3}, the attention coefficents (i.e. the aggregation weights) between a node and its neighborhoods are proportional to the similarity of their node embeddings. Therefore, the attention coefficients of GAT and AGNN can be regarded as the results of denoised weights on the existing edges in a graph, i.e., the graph attentions are implicitly denoising the edge weights.

In general case, graph attention functions as denoising edge weights. The input is noisy representations and the output is the groundtruth. Attention coefficients show how much distortion is corrected during the convolution operation. For example, if the input representations are also groundtruth, there is no need to fetch information from neighbors to get output. And edge weights will be reduced to $0$, i.e., attention coefficients on edges are calculated as $0$. If the input representations are very noisy, i.e., much noise are removed, attention coefficients on edges should be large to restore the groundtruth signal.
Therefore, in the KI scenario, we can use attention coefficients in graph attention in graph convolution layer to interpret the KI process based on the information flow. As for the CR and CF, equally, we can use the attention coefficients on the self-loop edges for interpretation, such as how much original information is remembered/forgotten.

\end{proof}

\section{Bijective MLP} \label{appendix:bijection}
\begin{theorem} \label{th_bj}
	Give an MLP layer denoted as $\MLP(\vx) = {\rm sigmoid}(\mW\vx + \vb).$ If $\mW$ is a square matrix, there exist a constant $\lambda_0'>0$ that for any $0<\epsilon<\lambda_0'$, the function below is bijective: 
	\begin{equation}\label{eq:mlpn}\small
	\MLP_n(\vx) = {\rm sigmoid}((\mW-\epsilon\mI) \vx + \vb).
	\end{equation}
\end{theorem}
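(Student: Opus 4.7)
The plan is to reduce bijectivity of $\MLP_n$ to bijectivity of its two components: the affine map $\vx \mapsto (\mW-\epsilon\mI)\vx + \vb$ and the coordinate-wise sigmoid. Since composition of bijections is bijective, it suffices to show each is bijective on the appropriate domain. For the sigmoid, the proof already established in Appendix~\ref{appendix:proposition_simulation_1} that $\sigma(t) = 1/(1+e^{-t})$ is bijective from $\mathbb{R}$ onto $(0,1)$ (its inverse being $\ln(y/(1-y))$), so applying it coordinate-wise yields a bijection from $\mathbb{R}^d$ onto $(0,1)^d$. Thus the entire argument boils down to showing the affine map is a bijection of $\mathbb{R}^d$ onto itself, which amounts to showing $\mW - \epsilon\mI$ is invertible for all sufficiently small $\epsilon>0$.

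For the invertibility step, I would use the characteristic polynomial $p(t) = \det(\mW - t\mI)$. Since $\mW$ is a $d\times d$ square matrix, $p$ is a polynomial of degree $d$ and therefore has at most $d$ roots in $\mathbb{R}$ (and at most $d$ in $\mathbb{C}$), namely the (real) eigenvalues of $\mW$. The matrix $\mW - \epsilon\mI$ fails to be invertible precisely when $\epsilon$ is a real eigenvalue of $\mW$. Define
\begin{equation}
\lambda_0' \;=\; \min\bigl\{\, \lambda > 0 \,:\, \lambda \text{ is a real eigenvalue of } \mW \,\bigr\},\nonumber
\end{equation}
with the convention $\lambda_0' = 1$ (or any fixed positive constant) when the set above is empty. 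Because the set of positive real eigenvalues is finite, this minimum is well-defined and strictly positive. Then for any $0 < \epsilon < \lambda_0'$, the scalar $\epsilon$ is not an eigenvalue of $\mW$, so $\det(\mW - \epsilon\mI) \neq 0$ and the linear map $\vx \mapsto (\mW-\epsilon\mI)\vx$ is a bijection of $\mathbb{R}^d$; adding the constant $\vb$ preserves bijectivity.

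Concluding, $\MLP_n(\vx) = \sigma\bigl((\mW-\epsilon\mI)\vx + \vb\bigr)$ is a composition of two bijections, hence bijective from $\mathbb{R}^d$ onto $(0,1)^d$. I do not expect any serious obstacle here; the only subtlety is the edge case in which $\mW$ has no positive real eigenvalues (so $\mW$ itself is already invertible and one might even take $\epsilon = 0$), which is handled cleanly by the convention above. The role of the perturbation $\epsilon\mI$ is purely to guarantee invertibility when $\mW$ is singular or has $0$ as an accumulation direction of its spectrum; shifting by any $\epsilon$ smaller than the smallest positive eigenvalue moves the spectrum off zero without crossing another eigenvalue.
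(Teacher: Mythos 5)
Your proof is correct and follows essentially the same route as the paper's: decompose $\MLP_n$ into the affine map and the coordinate-wise sigmoid, note that a composition of bijections is bijective, and use the characteristic polynomial to argue that $\det(\mW-\epsilon\mI)\neq 0$ for $\epsilon$ below a spectral threshold. Your definition of $\lambda_0'$ as the smallest \emph{positive real} eigenvalue (with a convention when none exists) is in fact slightly more careful than the paper's choice of $\min_i|\lambda_i'|$, which degenerates when $\mW$ is singular --- precisely the case the perturbation is meant to fix.
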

\begin{proof}
We first prove that two bijective function compositions are still bijective. Then, we prove that adding a small noise on MLP weight matrix can make it bijective.

Give two MLP function $f_1(\cdot)$ and $f_2(\cdot)$. Suppose they are injective and suppose $f_1(f_2(\vx)) = f_1(f_2(\vy))$. Since we know that $f_1(\cdot)$ is injective, we have $f_2(\vx) = f_2(\vy)$. Similarly, since $f_2(\cdot)$ is injective, we have $\vx=\vy$. Thus $f_1(f_2(\cdot))$ is injective. Suppose $f_1(\cdot)$ and $f_2(\cdot)$ are surjective and $\vz \in C$. Since we know that $f_1(\cdot)$ is surjective, there exists a set of $\vy \in B$ with $f_1(\vy)=\vz$. Similarly, since $f_2(\cdot)$ is surjective, there exists a set of $\vx \in A$ with $f_2(\vx) = \vy$. Then, we have $\vz = f_1(f_2(\vx))$ and so $\vz$ is onto $f_1(f_2(\cdot))$. Thus, $f_1(f_2(\cdot))$ is surjective. Therefore, if $f_1(\cdot)$ and $f_2(\cdot)$ are bijective, $f_1(f_2(\cdot))$ is also bijective.

To prove that the special MLP is bijective, consider an MLP function as 
\begin{equation}\small
    \begin{aligned}
        \MLP(\vx) = \sigma(\mW\vx + \vb),
    \end{aligned}\nonumber
\end{equation}
where $\mW \in \mathbb{R}^{C\times C}$ is the weight matrix and $\vb \in \mathbb{R}^{C}$ is the bias. Let 
\begin{equation}\small
    \begin{aligned}
        p(t) = \prod_{i=1}^{C} (\lambda_{i}' -t) 
    \end{aligned}\nonumber
\end{equation}
be the characteristic polynomial for weight matrix $\mW$. Here $\lambda_{i}'$ are eigenvalues of matrix $\mW$. Without loss of generality, let $|\lambda_{0}'| = \min_{i}{|\lambda_{i}'|}$. Then, we know that for any constant $0 < \epsilon < |\lambda_{0}'|$, we have 
\begin{equation}\small
    \begin{aligned}
        \text{det}(\mW-\epsilon\mI)=p(\epsilon) \neq 0.
    \end{aligned}\nonumber
\end{equation}
Thus, if the perturbation $\epsilon$ is small enough, the perturbed matrix $\mW'=\mW-\epsilon\mI$ is nonsingular. Consider the fact that the nonlinear activation function $\sigma(\cdot)$ is ${\rm sigmoid}(\cdot)$ function, which is bijective. Therefore, the special MLP function $\MLP_n(\cdot)$ is bijective.

Note that in practice, we use the floating-point arithmetic. Consider the float accuracy. Small errors from the float approximation can be regarded as the constant $\epsilon$, and in most cases, it satisfies the assumption $0 < \epsilon < |\lambda_{0}'|$. Thus, we can regard MLPs with square weight matrices in practice as bijective functions.

\end{proof}

\section{Implementation Details}
\subsection{GCS}\label{appendix:gcs_param}
In practice, GCS is composed of $3$ layers: bijective MLP layer, graph convolutional layer, and another bijective MLP layer. As for bijective MLP layers, since weight matrices in them are square matrices, the dimension would remain unchanged: $768$ for ERNIE and $1024$ for K-Adapter. The nonlinear activation functions are set as ${\rm ELU}(\cdot)$ function, which is also bijective. The learning rate is set as $1e^{-3}$, and the dropout rate of the two MLP layers is $0.2$. 

Regarding the graph attention, to make sure interpretation results are stable, we apply multi-head attention mechanism, where the number of attention head is set as $8$. Entity representations are first embedded into a space with the dimension as $64$. Then, the embedded representations are used to calculate the attention coefficients. Note that since the purpose is to simulate and interpret the KI process, we do not split datasets for KI. Considering that GCS model is very simple for large KGs, overfitting is unlikely to happen. Thus, we optimize GCS for the whole datasets. Specifically, for K-Adapter, the whole KG is used for optimization, and results are used for interpretation. And for ERNIE, since the KG is very large, we sample a small subgraph with $1,344,393$ entities and $3,240,272$ triples for optimization (see Table~\ref{tab:statistics}), and then implement the optimized GCS on the whole KG for interpretation.

The objective function of optimizing GCS can be reconstruction loss minimization or MI maximization. In this paper, we all select MI maximization as the objective. Note that users can use other objectives such as the reconstruction loss minimization. Regarding the MI maximization, we optimize MI (equation~\ref{eq:gcsobj}) by maximizing the compression lemma lower bound~\citep{mi_cllb_banerjee06} as in~\citet{mine_belghazi18}. The inputs of GCS are $\mX$, and let the output be denoted by $\mZ$. We can regard $\mZ$ and $\mH$ as empirical samples of random variables $\rvz$ and $\rvh$. Thus, we have:
\begin{equation}\label{eq:gcs_obj_lb}\small
\MI(\rvz; \rvh) \geq \sup_{T\in\mathcal{F}}\mathbb{E}_{\mathbb{P}_{\rvz \rh}}[T] - \log(\mathbb{E}_{\mathbb{P}_{\rvz} \otimes \mathbb{P}_{\rvh}}[e^{T}]).
\end{equation}
Here, $\mathcal{F}$ can be any class of functions $T: \Omega \rightarrow \mathbb{R}$ satisfying certain integrability constraints~\citep{mine_belghazi18}. $\mathbb{P}_{\rvz \rvh}$ represents the joint distribution of $\rvz$ and $\rvh$, and $\mathbb{P}_{\rvz} \otimes \mathbb{P}_{\rvh}$ represents the product of their marginal distributions. In practice, we let $\mathcal{F} = \{T_{\theta_2} \}$ be the set of functions parameterized by a neural network, and optimize it by stochastic gradient descent. Then, the objective function can be rephrased as 
\begin{equation}\label{eq:gcs_obj_real}\small
\begin{aligned}
&\max_{\theta_1, \theta_2}{\Big( \mathbb{E}_{\mathbb{P}^{|\mathcal{V}|}_{\rvz, \rvh}}[T_{\theta_2}] - \log\big( \mathbb{E}_{\mathbb{P}^{|\mathcal{V}|}_{\rvz} \otimes \mathbb{P}^{|\mathcal{V}|}_{\rvh} }[e^{T_{\theta_2}}] \big) \Big)}, \\
& \text{ where } \rvz = \GCS_{\theta_1}(\rvx).
\end{aligned}
\end{equation}
In equation~\ref{eq:gcs_obj_real}, $\mathbb{P}^{|\mathcal{V}|}_{\rvz}$ represents the empirical distribution of $\rvz$, i.e., $\mZ$. If the KG is very large, we can optimize the network by sampling a small subgraph of the KG. In practice, we simply add two extra MLPs layers to GCS for MI maximization as~\citep{mine_belghazi18}. The added two MLP layers may not be bijective, where the dimension would be first reduced to $64$, then to $1$ for MI maximization. The nonlinear activation functions are all set as ${\rm ELU}(\cdot)$ function, which is also bijective. 

For interpretation, we use the attention coefficients on edges and self-loops to analyze the KI in terms of triples and entities. Different from~\citet{edgemasking_schlichtkrull21} that specially designs a discrete function to mask edges that are not important, we simply introduce a temperature hyperparameter $t$ and set it as $t=0.1$ to make the attention coefficient distribution hard.\footnote{Note that the principle of hyperparameter selection is to maximize the MI, i.e., objective function. Users may select appropriate hyperparameters depending on the situation.} Thus, knowledge can be well clustered into learned and unlearned.

\subsection{ERNIE and K-Adapter}\label{appendix:lm_param}
\textbf{KI.}
To ensure that the experiment settings are fair, we set hyperparameters as the default values. For \textbf{K-Adapter}, the code and hyperparameters for KI that we use are from the official projects\footnote{\href{https://github.com/microsoft/K-Adapter}{https://github.com/microsoft/K-Adapter}} published by the authors~\citep{wang-etal-2021-k}. The only two differences are that: we use PyTorch \textit{float 32} instead of \textit{float 16} since BERT and RoBERTa that we use are \textit{float32}, and we use $4$ NVIDIA Tesla V100 GPUs for KI training. 
For \textbf{ERNIE}, settings are the same. All hyperparameters for KI are set as their default values.\footnote{\href{https://github.com/thunlp/ERNIE}{https://github.com/thunlp/ERNIE}} Similarly, \textit{float 16} of PyTorch is changed to \textit{float 32}, and we do the integration with $4$ NVIDIA Tesla V100 GPUs. Note that the dataset that ERNIE used for KI is Wikipedia, since the code is to fetch latest version of it, the data that we use could be slightly different. Therefore, for both ERNIE and K-Adapter, to ensure the fairness, we reproduce their KI, and report the results of reproduced models instead of results provided in their papers.

\noindent
\textbf{Finetuning.}
As for the downstream tasks, all the hyperparameters are consistent with the official project: either they are given in the project or in the README. In the same way, \textit{float 32} and $4$ NVIDIA Tesla V100 GPUs are chosen to make sure that the comparison is fair. Note that for K-Adapter and ERNIE, the best performance for different datasets is achieved in different settings. For example, the best performance for K-Adapter on the OpenEntity dataset is achieved with single GPU, but on the FIGER dataset is achieved with four GPUs. Since we focus on the relative performance and the fairness of the comparison, we run finetuning on $4$ NVIDIA Tesla V100 GPUs for all downstream tasks and all LMs (as well as BERT and RoBERTa).

\section{Additional Statistics}\label{appendix:additional_statistics}
\begin{table*}[!htbp]
	\caption{Statistics of T-REx-rc and Wikidata. The datasets that K-Adapter and ERNIE use are T-REx-rc and Wikidata.}\smallskip
	\label{tab:statistics}
	\centering
	\resizebox{1.8\columnwidth}{!}{
		\smallskip\begin{tabular}{c|ccccc}
			\toprule
			{\diagbox{Datasets}{Statistics}} & \# of entities & \# of triples & \# of aligned sentences & \# of entities (optimization) &  \# of triples (optimization) \\
			\hline
            T-REx-rc & 781,275 & 1,282,504 & 5,565,478 & - & - \\
            Wikidata & 3,275,534 & 12,849,311 & - & 1,344,393 & 3,240,272 \\
			\bottomrule
			\hline
		\end{tabular}
	}	
\end{table*}

\begin{table*}[!htbp]
	\caption{Drop statistics for the Integration Experiment.}\smallskip
	\label{tab:drop}
	\centering
	\resizebox{1.8\columnwidth}{!}{
		\smallskip\begin{tabular}{c|ccc}
			\toprule
			\diagbox{Datasets}{Statistics} & Percentage of integrated entities & Percentage of integrated triples & \# of aligned sentences/entity embeddings (integrated knowledge) \\
			\midrule
            T-REx-rc & - & 28.86\% & 561,687 out of 5,565,478 \\
            Wikidata & 61.72\% & - & 2,240,260 out of 3,275,534\\
			\bottomrule
			\hline
		\end{tabular}
	}	
\end{table*}

\begin{table*}[!htbp]
	\caption{Performance change of K-Adapter and ERNIE on the OpenEntity dataset with different test sets.}
	\smallskip
	\label{tab:verify_output}
	\centering
	\resizebox{1\columnwidth}{!}{
		\smallskip\begin{tabular}{c|cccc}
			\toprule
			\multirow{2}*{Model (Test set)} & \multicolumn{4}{c}{OpenEntity}  \\
			\cline{2-5}
			~ & Left test set & P & R & $\Delta$F1-Micro \\
			\midrule
			K-Adapter (w/o IE) & 37.44\% & {\color{green} $-$} 0.33 & {\color{green} $-$} 0.37 & {\color{green} $-$} 0.35 \\
            K-Adapter (w/o UE) & 64.46\% & {\color{green} $-$} 0.18 & {\color{red} $+$} 1.12 & {\color{red} $+$} 0.47 \\
            \hline
            ERNIE (w/o IE) & 27.28\% & {\color{green} $-$} 18.20 & {\color{green} $-$} 25.14 & {\color{green} $-$} 22.67 \\
            ERNIE (w/o UE) & 66.87\% & {\color{green} $-$} 0.31 & {\color{red} $+$} 3.08 & {\color{red} $+$} 1.57 \\
			\bottomrule
			\hline
		\end{tabular}
	}	
\end{table*}

\begin{figure*}[!htbp]
	\centering
	\subfigure{\includegraphics[scale=.135]{./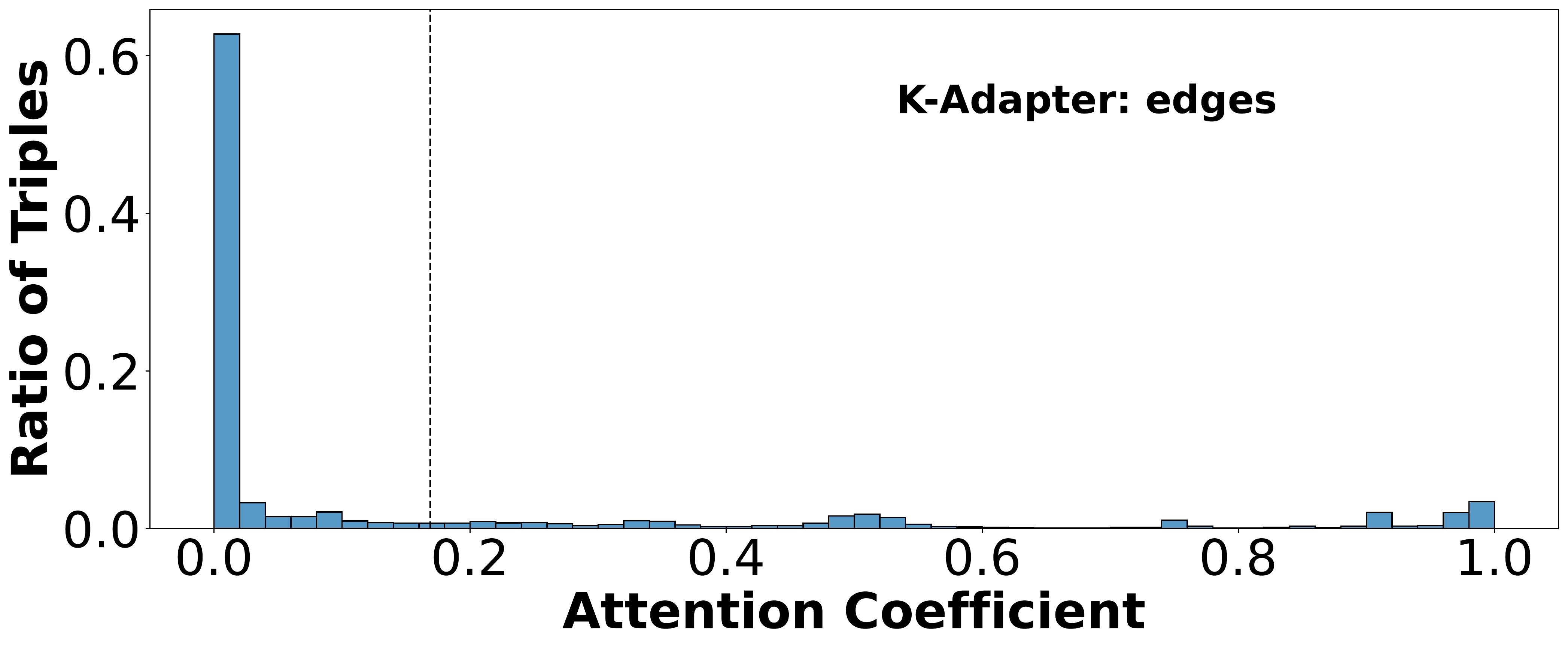}}
	\vspace{-.28cm}
	\subfigure{\includegraphics[scale=.135]{./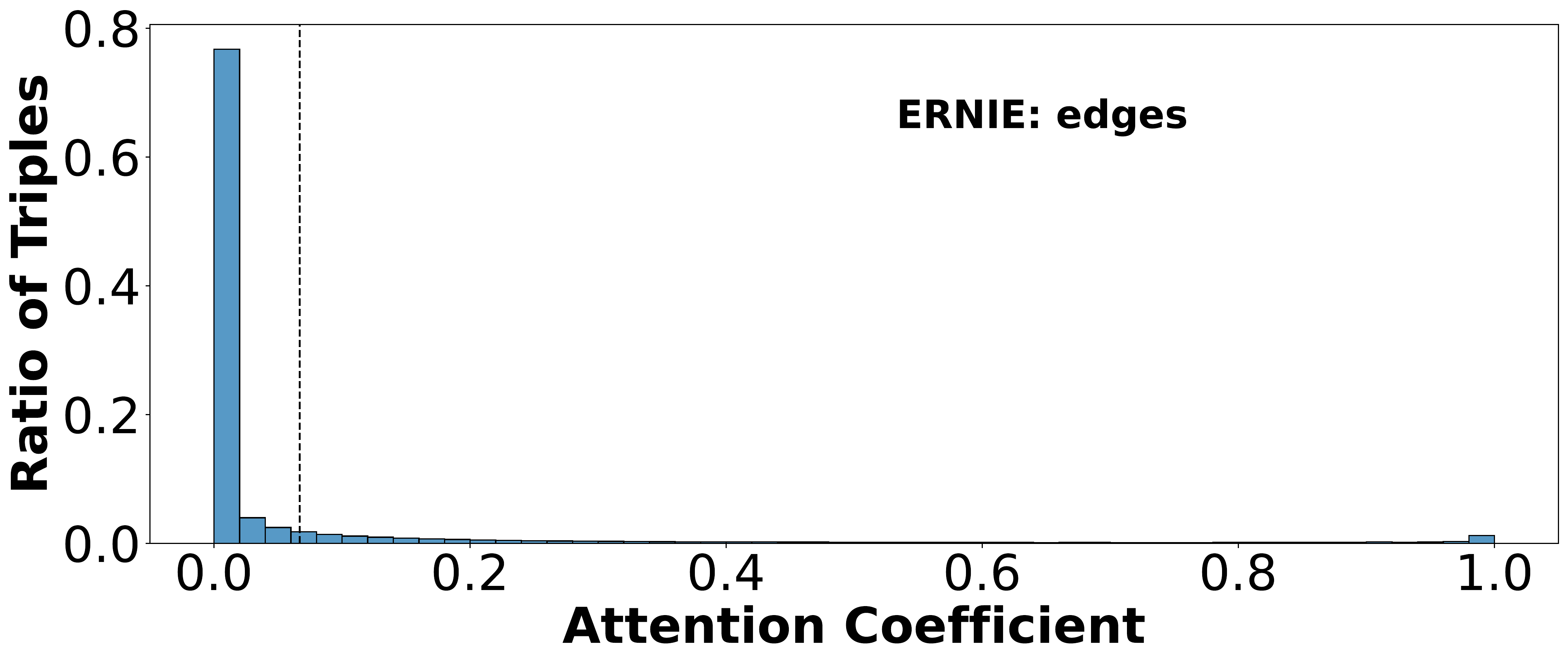}}
	\vspace{-.1cm}
	\subfigure{\includegraphics[scale=.135]{./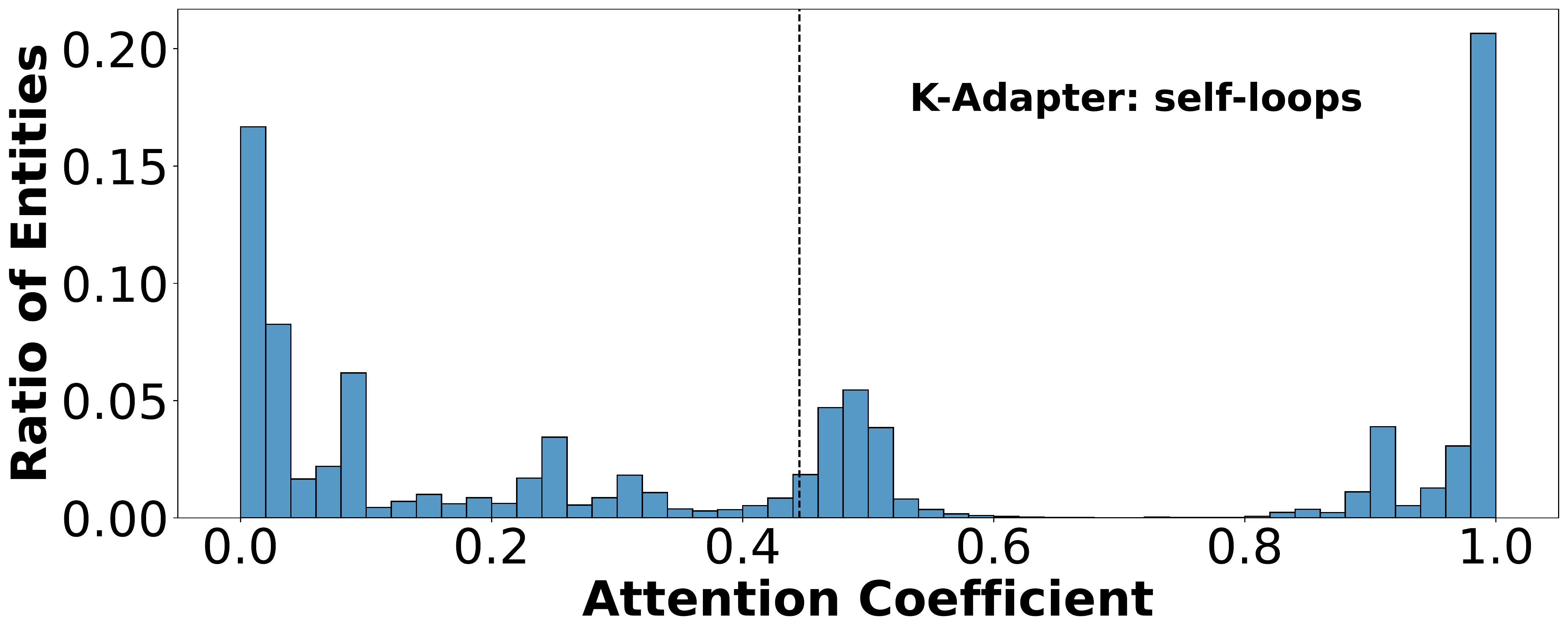}}
	\vspace{-.1cm}
	\subfigure{\includegraphics[scale=.135]{./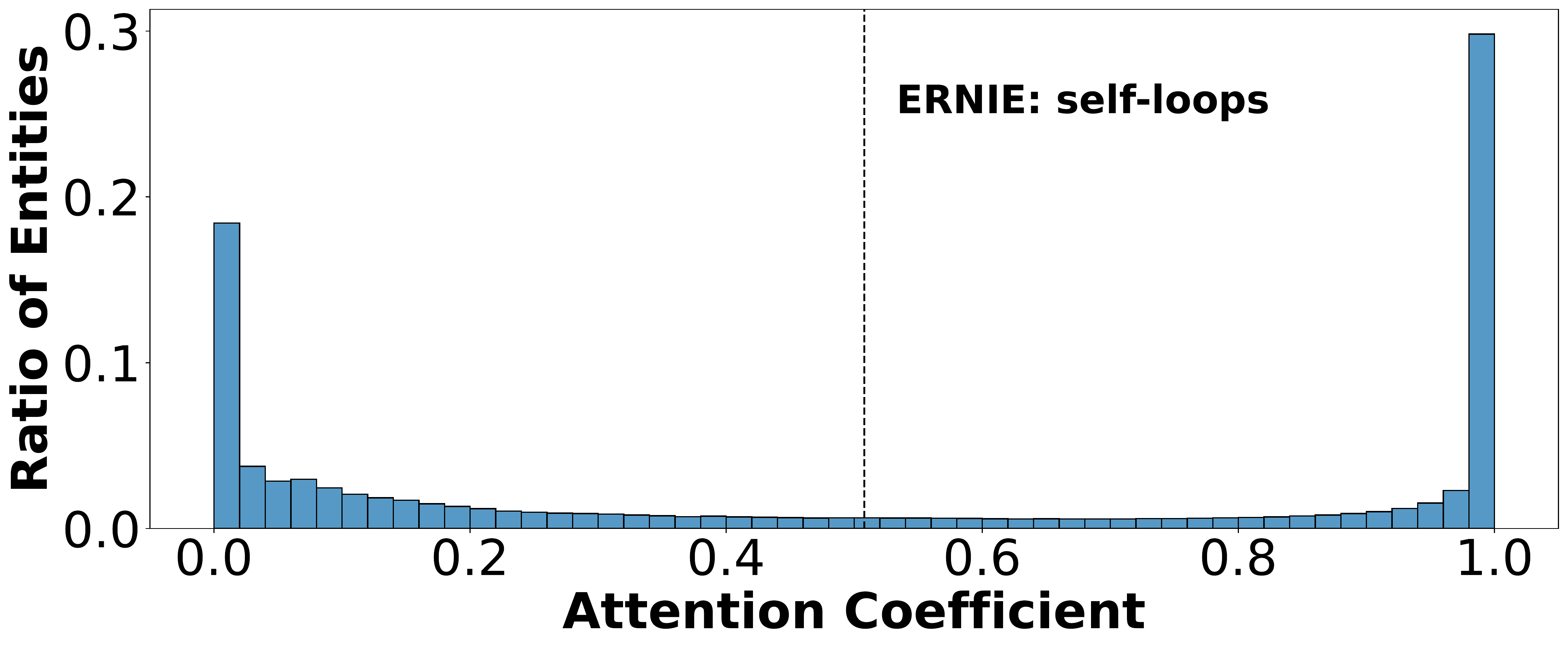}}
	\caption{The attention coefficient distributions of edges and self-loops for K-Adapter and ERNIE. The histogram shows the empirical distributions (i.e., frequency), and the blue curves are the Gaussian kernel density estimate. The black dashed vertical lines indicate the average values.}
	\label{fig:all_dist}
\end{figure*}

\begin{figure*}[!htbp]
    \centering
    \includegraphics[scale=.42]{./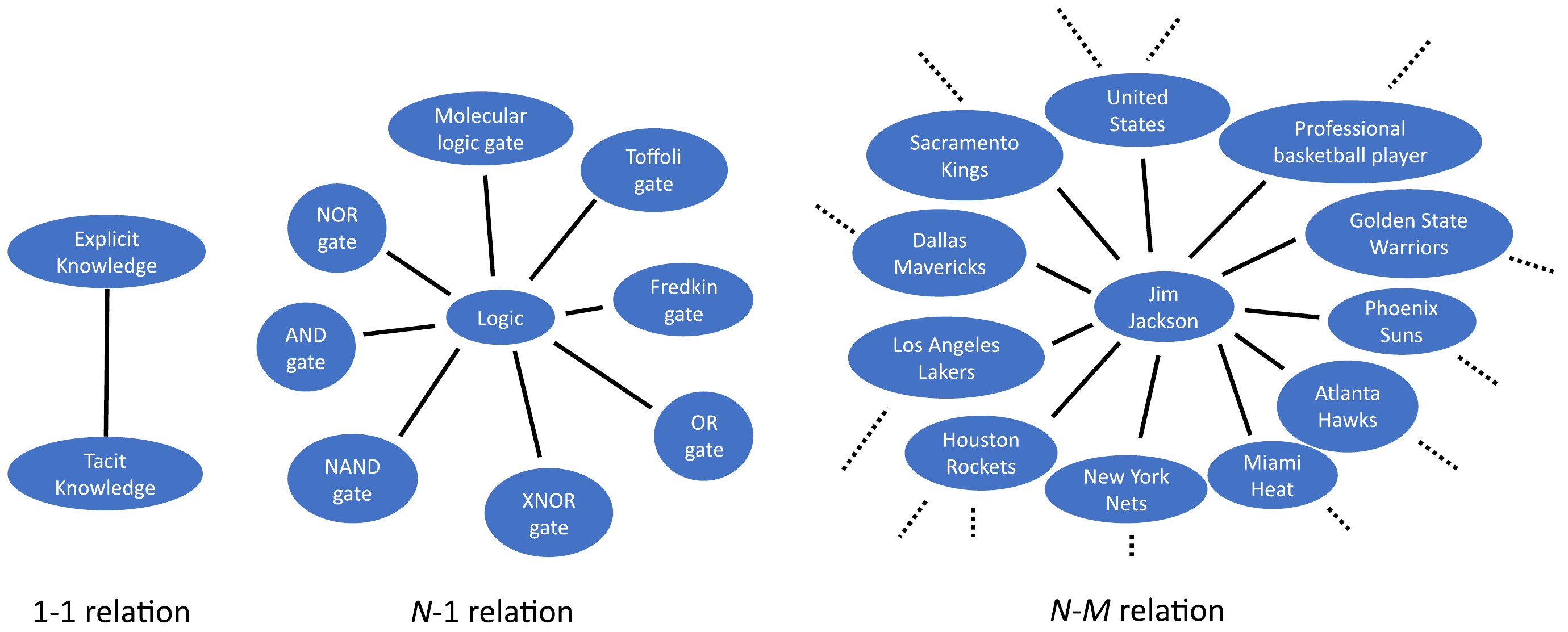}
    \caption{An example of different relations. The black solid lines are relations. We can find that nodes connect to multiple neighbors (e.g., "Logic") are more common than those leaf node (e.g., "OR gate").}
    \label{fig:kg_relation}
\end{figure*}

\begin{table*}[!htbp]
	\caption{Analysis of KI interpretation results for K-Adapter and ERNIE in terms of different types of relations (topology feature). The percentages of integrated entities/triples, as well as of CR and CF entities for each type of relations are presented.}
	\smallskip
	\label{tab:specific}
	\centering
	\resizebox{1.5\columnwidth}{!}{
		\smallskip\begin{tabular}{c|cccc}
			\toprule
			\multirow{2}{*}{\diagbox{Statistics}{Model}} & \multicolumn{4}{c}{K-Adapter on T-REx-rc} \\
			\cline{2-5}
			~ & $1-1$ relation & $N-1$ relation & $N-M$ relation & Total  \\
			\midrule
            \# of triples & 21,690 & 813,674 & 1,729,644 & 2,565,008 \\
            Integrated triple percentage & 58.89\% & 38.39\% & 24.00\% & 28.86\% \\
            \hline
            \# of connected entities & 21,690 & 406,837 & 352,748 & 781,275 \\
            CR entity percentage & 41.11\% & 31.72\% & 26.02\% & 29.41\%\\
            CF entity percentage & 26.40\% & 30.29\% & 40.89\% & 34.97\%\\
            \hline
            \hline
			\multirow{2}{*}{\diagbox{Statistics}{Model}} & \multicolumn{4}{c}{ERNIE on Wikidata} \\
			\cline{2-5}
			~ & $1-1$ relation & $N-1$ relation & $N-M$ relation & Total  \\
			\midrule
            \# of connected entities & 1,799 & 529,186 & 2,744,549 & 3,275,534 \\
            Integrated entity percentage & 70.65\% & 42.86\% & 73.33\% & 68.39\% \\
            \hline
            CR entity percentage & 29.41\% & 56.07\% & 26.67\% & 38.28\%\\
            CF entity percentage & 23.18\% & 8.65\% & 37.10\% & 32.49\%\\
			\bottomrule
			\hline
		\end{tabular}
	}	
\end{table*}

\begin{table*}[!htbp]
	\caption{The number of aligned sentences for relations.}\smallskip
	\label{tab:label_num}
	\centering
	\resizebox{1.2\columnwidth}{!}{
		\smallskip\begin{tabular}{c|c}
			\toprule
			\diagbox{Relation label}{Statistics} & \multicolumn{1}{c}{\# of triples} \\
			\midrule
            Place of birth & 134,976 \\
            Part of & 134,999 \\
            \hline
            Date of death & 135,190 \\
            Date of birth & 135,169 \\
            \hline
            Located in the administrative territorial entity & 135,055 \\
            Country & 135,147 \\
            \hline
            Total & 5,565,478 \\
			\bottomrule
			\hline
		\end{tabular}
	}	
\end{table*}


\end{document}